\documentclass{article} 
\usepackage{iclr2026_conference,times}

\usepackage[utf8]{inputenc} 
\usepackage[T1]{fontenc}    
\usepackage{hyperref}       
\usepackage{url}            
\usepackage{booktabs}       
\usepackage{amsfonts}       
\usepackage{nicefrac}       
\usepackage{microtype}      
\usepackage{xcolor}         
\usepackage{todonotes}
\usepackage{glossaries}
\usepackage{centernot}
\usepackage{aligned-overset}
\usepackage{amsthm}
\usepackage{bbm}
\usepackage{mathrsfs}
\usepackage{makecell}
\usepackage{tikz}
\usepackage{pgfplots}
\usepackage{adjustbox}

\newcommand{\timeHorizon}{T}
\newcommand{\timeInstant}{t}
\newcommand{\Probability}{\mathbb{P}}
\newcommand{\Expectation}{\mathbb{E}}

\newcommand{\lossFunction}{\ell}
\newcommand{\objectiveProcess}{Z}
\newcommand{\objectiveProcessValue}{z}{
\newcommand{\objectiveProcessAlphabet}{\mathcal{Z}}
\newcommand{\objectiveProcessSigmaAlgebra}{\mathscr{G}}
\newcommand{\prediction}{b}
\newcommand{\predictionRV}{B}
\newcommand{\predictionDomain}{\mathcal{B}}
\newcommand{\predictionDomainSigmaAlgebra}{\mathscr{H}}

\newcommand{\regret}{\Delta}

\newcommand{\optimalPrediction}{\prediction^*}
\newcommand{\optimalPredictionRV}{\predictionRV^*}
\newcommand{\tail}{\varepsilon}
\newcommand{\proberror}{\delta}
\newcommand{\distributionIndex}{\theta}
\newcommand{\distributionIndexDomain}{\Theta}
\newcommand{\objectiveProcessDist}{P}
\newcommand{\distributionIndexWeight}{w}
\newcommand{\universalDist}{Q}

\newcommand{\tvdist}[1]{\left\lVert #1 \right\rVert_\mathrm{TV}}
\newcommand{\kldiv}[2]{D\left({#1} || {#2}\right)}
\newcommand{\generalReal}{x}
\newcommand{\exampleParameter}{\varphi}
\newcommand{\exampleParameterTwo}{\psi}
\newcommand{\constant}{C}
\newcommand{\deterministicBound}{c}

\newcommand{\noArgument}{\emptyset}
\newcommand{\maxLoss}{L}
\newcommand{\objectiveProcessFiltratrion}{\mathscr{F}}
\newcommand{\elementaryEvent}{\omega}
\newcommand{\probSpace}{\Omega}
\newcommand{\sigmaAlgebra}{\mathscr{F}}
\newcommand{\sigmaAlgebraElement}{F}

\newcommand{\memory}{m}
\newcommand{\landauO}{\mathcal{O}}
\newcommand{\processKernel}[2]{K_{#1}^{(#2)}}
\newcommand{\nfoldProductAlgebra}[2]{#1^{\otimes #2}}
\newcommand{\objectiveProcessSigmaAlgebraElement}{G}
\newcommand{\timeInstantsSet}{\mathcal{T}}
\newcommand{\martingale}{M}
\newcommand{\rnderiv}[2]{\frac{d#1}{d#2}}
\newcommand{\rnderivInline}[2]{d#1/d#2}

\newcommand{\naturals}{\mathbb{N}}
\newcommand{\naturalsInitialSegment}[1]{[#1]}
\newcommand{\absolutelyContinuous}{\ll}
\newcommand{\notAbsolutelyContinuous}{\centernot\ll}
\newcommand{\exampleDistParameter}{\varphi}
\newcommand{\indicator}[1]{\mathbbm{1}_{#1}}
\newcommand{\generalProcessOne}{X}
\newcommand{\generalProcessTwo}{Y}
\newcommand{\generalProcessThree}{Z}
\newcommand{\reals}{\mathbb{R}}

\newcommand{\errorExpTV}{\alpha}
\newcommand{\errorExpKL}{\beta}

\newcommand{\generalNatural}{n}
\newcommand{\rhsOne}{R_1}
\newcommand{\rhsTwo}{R_2}
\newcommand{\generalRhsIndex}{i}
\newcommand{\rhsGen}{R_\generalRhsIndex}
\newcommand{\divergenceInstSummand}{d}
\newcommand{\divergenceInst}{\hat{D}}
\newcommand{\divergence}{D}
\newcommand{\variationInstSummand}{v}
\newcommand{\variationInst}{\hat{V}}
\newcommand{\variation}{V}
\newcommand{\predictionProblem}{\mathcal{P}}
\newcommand{\asymptoticPredictionProblem}{\mathcal{A}}
\newcommand{\absoluteValue}[1]{\left| #1 \right|}
\newcommand{\transitionProbs}[2]{p_{#1|#2}}
\newcommand{\numStates}{S}
\newcommand{\state}{s}

\newcommand{\posterior}[2]{\mathrm{post(#1|#2)}}
\newcommand{\policyConstructionHigh}{q}
\newcommand{\policyConstructionLow}{r}
\newcommand{\cardinality}[1]{\left|#1\right|}
\newcommand{\shannonEntropy}{H}

\makeatletter
\newcommand{\vast}{\bBigg@{4}}
\newcommand{\Vast}{\bBigg@{5}}
\makeatother

\DeclareMathOperator*{\argmin}{arg\,min}

\newtheorem{theorem}{Theorem}
\newtheorem{lemma}{Lemma}
\newtheorem{cor}{Corollary}

\newtheorem{remark}{Remark}
\newtheorem{problem}{Problem}

\newacronym{rnn}{RNN}{recurrent neural network}

\title{Online Prediction of Stochastic Sequences with High Probability Regret Bounds}

\author{%
  Matthias Frey, Jonathan H. Manton, and Jingge Zhu \\
  Department of Electrical and Electronic Engineering, The University of Melbourne
}

\iclrfinalcopy 
\begin{document}

\maketitle

\begin{abstract}
We revisit the classical problem of universal prediction of stochastic sequences with a finite time horizon $\timeHorizon$ known to the learner. The question we investigate is whether it is possible to derive vanishing regret bounds that hold with high probability, complementing existing bounds from the literature that hold in expectation. We propose such high-probability bounds which have a very similar form as the prior expectation bounds. For the case of universal prediction of a stochastic process over a countable alphabet, our bound states a convergence rate of $\landauO(\timeHorizon^{-1/2} \proberror^{-1/2})$ with probability as least $1-\proberror$ compared to prior known in-expectation bounds of the order $\landauO(\timeHorizon^{-1/2})$. We also propose an impossibility result which proves that it is not possible to improve the exponent of $\proberror$ in a bound of the same form without making additional assumptions.
\end{abstract}

\section{Introduction}
A classical question in online learning, information theory, and several related fields is how to make predictions about the outcome $\objectiveProcess_\timeInstant$ of a process at time $\timeInstant$ given some knowledge of the past outcomes $\objectiveProcess_1, \dots, \objectiveProcess_{\timeInstant-1}$. Its appearance in the early works by \cite{shannon1951prediction,bellman1954decision,hannan1957approximation} reflects the central role it plays in several pure and applied research areas of mathematics, computer science, and engineering. This basic question comes in many variations such as: (i) The process $(\objectiveProcess_\timeInstant)$ can follow a fixed, but unknown probability distribution, or it can be deterministic and unknown, which even includes the possibility that it is chosen adversarially given the learner's strategy~\citep{merhav1998universal}. (ii) The learner can have access to the advice of experts~\citep{weissman2001universal} or other side information about the process, or it can make its predictions without access to such information~\citep{bellman1954decision}. (iii) The learner can make its predictions only for $\objectiveProcess_1, \dots, \objectiveProcess_\timeHorizon$, where $\timeHorizon$ can be a fixed time horizon known to the learner~\citep{bellman1954decision} or a random stopping time~\citep{morvai2007sequential}; or the learner can make predictions indefinitely for $(\objectiveProcess_\timeInstant)_{\timeInstant \in \naturals}$~\citep{gyorfi1999simple}. (iv) The learner can have access only to past outcomes $\objectiveProcess_1, \dots, \objectiveProcess_{\timeInstant-1}$~\citep{bellman1954decision}, or it can have access to an infinite past $(\objectiveProcess_{\timeInstant'})_{\timeInstant' = -\infty}^{\timeInstant-1}$~\citep{gyorfi1999simple}.

In this paper, we focus on a particular one of these scenarios, which is, however, still general enough to be considered a fundamental question. Namely, we assume that the learner makes predictions about a random process $\objectiveProcess_1, \dots, \objectiveProcess_\timeHorizon$ where $\timeHorizon$ is a deterministic time horizon known to the learner. We further assume that $\objectiveProcess_1, \dots, \objectiveProcess_\timeHorizon$ follows a joint probability distribution $\objectiveProcessDist$ which does not need to have identical or independent components, but is not influenced by the predictions $(\predictionRV_\timeInstant)_{\timeInstant=1}^\timeHorizon$ the learner makes at each time instant $\timeInstant$ based on full knowledge of $\objectiveProcess_1, \dots, \objectiveProcess_{\timeInstant-1}$ (we use the common convention in this paper that $\objectiveProcess_1, \dots, \objectiveProcess_0$ is the empty sequence). The quality of the learner's decisions is measured in terms of a loss $\lossFunction(\predictionRV_\timeInstant, \objectiveProcess_\timeInstant)$. The main quantity of interest is the \emph{regret} of the learner in hindsight, i.,e., the difference between the cumulative loss incurred by the learner and the cumulative loss incurred by a prediction strategy that would have been optimal given full knowledge of $\objectiveProcessDist$. This scenario has been proposed in this form by \cite{merhav1998universal} and solved \emph{in expectation}: that is, the authors analyze the expectation of the learner's regret. To the best of our knowledge, our paper is the first that studies \emph{high-probability bounds} of the learner's regret for this particular scenario, i.e., we propose bounds that the learner's regret surpasses only with a small probability $\proberror$. Such high-probability bounds hold enormous promise for practical applications where reliability is key, such as accident prediction in air traffic control \citep{amin2024towards}, trajectory prediction in autonomous driving \citep{zhang2022adversarial}, and sepsis prediction in health care \citep{boussina2024impact}.

The contribution of this paper can be summarized as following:
\begin{itemize}
  \item For the problem of universal prediction of stochastic sequences under a general, bounded loss, we propose a high-probability regret bound (Theorem~\ref{theorem:highprob-regret}) that complements existing in-expectation bounds in the literature. Our high-probability bound matches the prior expectation bound in terms of convergence order in the time horizon $\timeHorizon$. To the best of our knowledge, high-probability bounds for this scenario have not appeared in previous works.
  \item We show with an impossibility result (Theorem~\ref{theorem:impossibility}) that our proposed high-probability regret bound cannot be significantly improved regarding the order of the permissible error probability that appears in the bound without making additional assumptions.
  \item In contrast with \cite{merhav1998universal}, our results do not necessarily require that the underlying spaces are finite. Instead, we use much milder technical assumptions.
\end{itemize}
The remainder of the paper is structured as follows: In Section~\ref{section:literature}, we survey relevant prior works on the topic of prediction of sequences. In Section~\ref{section:universal-mismatched}, we introduce the universal prediction problem and give a high-level overview of our high-probability result and compare it to the most closely related expectation bound found in prior works. We fully formalize a simplified version of the problem, which we call the mismatched prediction problem, in Section~\ref{section:problem-mismatched-prediction} and introduce all notations and definitions that are necessary to state and discuss our results in Section~\ref{section:results-mismatched-prediction}. In Section~\ref{section:universal-dist}, we show how our result on mismatched prediction can be applied in conjunction with known techniques from the literature to yield high-probability convergence rates for the universal prediction problem. In Section~\ref{section:numerics}, we describe the numerical experiments we have run to illustrate the promise of the schemes studied in this paper for practical applications, and we report on their results. The paper is concluded in Section~\ref{section:conclusion} with a discussion of limitations and future research directions.

\section{Related works}
\label{section:literature}
The wider question of prediction of sequences is a classical research area. The distinct but related bandit problem has first been studied by \cite{thompson1933likelihood} (see, e.g., \cite{lattimore2020bandit} for an overview of more recent developments). A scenario that is very closely related to the one we study in this paper was to the best of our knowledge first described by \cite{shannon1951prediction}. He posed the question of how well a stochastic process could be predicted given past observations and related the prediction quality to an information quantity, namely what is now called Shannon entropy. He used this connection to estimate the entropy contained in expressions in the English language based on how accurately a human test subject would be able to predict the next letter in an incomplete sentence. The first works concerned with strategies for optimum prediction of future realizations of a sequence were \cite{bellman1954decision,hannan1957approximation} for the case of a deterministic process unknown to the learner (and potentially selected by an adversary with knowledge of the learner's strategy). The further development of the various flavors of prediction of processes throughout the remainder of the 20th century is summarized by \cite{merhav1998universal} which is also the earliest work we are aware of that proposes the exact version of the problem which we study in this work (but provides bounds only for \emph{expected} regret).

\textbf{Prediction with expert advice.}
In this version of the problem, the learner has access to advice given by so-called \emph{experts}, where at each time step, it can decide which expert's advice to follow. The regret in this case is typically the difference between the loss incurred by the learner and the loss incurred by the best available expert in hindsight. Examples of works considering this version of the problem include \cite{cesa2001worst,weissman2001universal,wu2023regret,hanneke2023multiclass} for deterministic sequences and \cite{weissman2004universal} for stochastic sequences.

\textbf{Prediction of deterministic sequences.}
The most prevalent version of the sequential prediction problem does not make any stochastic assumptions about the underlying process that has to be predicted and studies bounds on the worst-case regret. That is, the predicted sequence is assumed deterministic and could in principle be chosen by an adversary with knowledge of the learner's prediction strategy. Early works on this include \cite{bellman1954decision,hannan1957approximation}, the problem is also studied and surveyed by \cite{merhav1998universal}, and a very comprehensive treatment can be found in \cite{cesa2006prediction}. More recent developments include \cite{abernethy2012interior,hanneke2023multiclass,cutkosky2024fully}.

It is worth noting that an unknown deterministic sequence can be viewed as a stochastic sequence that follows an unknown Dirac distribution. So results for the prediction of stochastic sequences can potentially be relevant for deterministic sequences as well. However, if the underlying sequence follows a Dirac distribution, this means in particular that if the learner's predictions are deterministic conditioned on the part of the sequence that has already been observed (as is the case in the strategy analyzed in this paper), the instantaneous regret is almost surely equal to the average regret. Hence, our high-probability results do not offer any advantage over the known bounds in expectation in the case of deterministic sequences.

\textbf{Prediction of stochastic sequences.}
Under the assumptions of access to an infinite past and stationarity, ergodicity, or Markovness of the underlying process, this problem was studied by \cite{algoet1994strong,gyorfi1999simple,gyorfi2007sequential} (see \cite{morvai2007sequential} for a survey of more results along these lines). The assumption of access to an infinite past was lifted by \cite{morvai2011nonparametric} who proved almost sure convergence of regret quantities, but did not provide quantitative high-probability bounds. Bounds on expected regret were studied by \cite{merhav1998universal}, \cite{hutter2003optimality}, \cite{watanabe2015achievability} (under an i.i.d. assumption), and \cite{wu2023online} (with the additional complication that the underlying process may change its distribution a certain number of times). High-probability bounds for stochastic scenarios have been investigated by \cite{mehta2017fast} (under the assumption of a log-concave loss and an i.i.d. underlying process) and \cite{van2023high} (for an online-to-batch conversion scenario, i.e., it is shown that high-probability bounds on regret can imply high-probability bounds on generalization error of a batch version of the learner). The very recent works \cite{ghosh2024danse,qin2025idanse} deal with a signal processing application of the sequential prediction problem and employ \glspl{rnn} in their predictors.

To the best of our knowledge, there are no prior works that study high-probability bounds for the regret in prediction of non-i.i.d. stochastic sequences in the absence of expert advice.

\section{The universal and the mismatched prediction problems}
\label{section:universal-mismatched}
In the universal prediction problem with stochastic sequences~\citep[Section I-A]{merhav1998universal}, the learner observes the initial part $\objectiveProcess_1, \dots, \objectiveProcess_{\timeInstant-1}$ of some sequence $(\objectiveProcess_1, \dots, \objectiveProcess_\timeHorizon) \in \objectiveProcessAlphabet^\timeHorizon$ which follows a joint probability distribution $\objectiveProcessDist$. After it makes a prediction $\predictionRV_\timeInstant \in \predictionDomain$, the next realization $\objectiveProcess_\timeInstant$ is revealed, and the learner incurs a loss $\lossFunction(\predictionRV_\timeInstant, \objectiveProcess_\timeInstant)$. The objective is to find a prediction strategy which is \emph{universal} in the sense that it does not depend on $\objectiveProcessDist$ (and hence requires no knowledge of it) but still approaches --- for large enough $\timeHorizon$ --- the performance of a strategy that is optimized for $\objectiveProcessDist$. The performance is evaluated in the following way: we consider the difference between the average loss the learner's strategy incurs up to time $\timeHorizon$ and the average loss that a strategy optimized for $\objectiveProcessDist$ incurs up to time $\timeHorizon$. This difference is called the \emph{regret} and denoted $\regret$ (for a formal definition see \eqref{eq:cumulative-regret} and \eqref{eq:average-regret} below). A universal strategy will have vanishing $\regret$ as $\timeHorizon$ grows, regardless of what distribution $\objectiveProcessDist$ (within a certain known class of probability distributions) the sequence $\objectiveProcess_1, \dots, \objectiveProcess_\timeHorizon$ follows.

It is often assumed that $\objectiveProcessDist = \objectiveProcessDist_\distributionIndex$ for some $\distributionIndex \in \distributionIndexDomain$ where $\distributionIndexDomain$ is a measurable space and $(\objectiveProcessDist_\distributionIndex)_{\distributionIndex \in \distributionIndexDomain}$ is a parametrized family of probability distributions known to the learner. The idea in case very little is known about $\objectiveProcessDist$ is to use a family $(\objectiveProcessDist_\distributionIndex)_{\distributionIndex \in \distributionIndexDomain}$ that is so large that it will contain (almost) all $\objectiveProcessDist$ that are reasonably possible. One example of theoretical interest is investigated by \cite{hutter2003optimality}: let $\distributionIndexDomain$ be the set of all programs on a universal Turing machine that compute a probability distribution on $\objectiveProcessAlphabet^\naturals$ to arbitrary precision, and let $\objectiveProcessDist_\distributionIndex$ be the distribution computed by $\distributionIndex \in \distributionIndexDomain$. Then, any prediction strategy that is universal for $(\objectiveProcessDist_\distributionIndex)_{\distributionIndex \in \distributionIndexDomain}$ guarantees vanishing regret for every sequence $\objectiveProcess_1, \dots$ that follows a probability distribution which is computable (in the sense that there exists a program on a universal Turing machine that computes it to arbitrary precision). This is not directly applicable to practical problems, of course, but it is a very powerful theoretical illustration of a basic idea that can also be useful in practice: based on very limited knowledge of properties of $\objectiveProcessDist$, make the family $(\objectiveProcessDist_\distributionIndex)_{\distributionIndex \in \distributionIndexDomain}$ so large that it can be guaranteed it contains $\objectiveProcessDist$. We do not make a contribution about how $(\objectiveProcessDist_\distributionIndex)_{\distributionIndex \in \distributionIndexDomain}$ can be chosen and how such a family induces a universal prediction strategy, but since this is essential to understand the consequences of our results, we briefly survey the literature on this in Section~\ref{section:universal-dist}.

We next give an idea of the nature of the implications of our results by informally stating a prior result on universal prediction and one of the consequences of our technical results.
\begin{theorem}[\cite{merhav1998universal,hutter2003optimality}]
\label{theorem:informal-expectation}
If $\objectiveProcessAlphabet$ is countable, there is a strategy which is universal for all computable $\objectiveProcessDist$ in the sense that we have
$
\Expectation \regret \leq \landauO(1/\sqrt{\timeHorizon})
$.
\end{theorem}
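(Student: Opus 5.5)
We plan to reduce the universal problem to a \emph{mismatched} prediction problem: the learner predicts optimally with respect to a fixed reference distribution $\universalDist$ instead of the unknown $\objectiveProcessDist$, and we bound the regret by how far $\universalDist$ is from $\objectiveProcessDist$. Concretely, we take the Bayes mixture $\universalDist = \sum_{\distributionIndex} \distributionIndexWeight_\distributionIndex \objectiveProcessDist_\distributionIndex$ over the countable family of computable distributions (indexed by programs $\distributionIndex$). We choose weights $\distributionIndexWeight_\distributionIndex \propto 2^{-K(\distributionIndex)}$ (prefix-free length), so that $\sum_\distributionIndex \distributionIndexWeight_\distributionIndex \le 1$. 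At time $\timeInstant$ the learner outputs $\predictionRV_\timeInstant \in \argmin_{\prediction} \Expectation_{\universalDist}[\lossFunction(\prediction, \objectiveProcess_\timeInstant) \mid \objectiveProcess_1,\dots,\objectiveProcess_{\timeInstant-1}]$. If only an $\epsilon$-minimizer exists, we take one with $\epsilon = 1/\timeHorizon$. This strategy does not depend on $\objectiveProcessDist$.

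The first step is the per-step mismatch bound. Let $\objectiveProcessDist_\timeInstant$ and $\universalDist_\timeInstant$ denote the conditional laws of $\objectiveProcess_\timeInstant$ given the past, and assume the loss is bounded, $0 \le \lossFunction \le \maxLoss$. For any $\prediction$ we have $|\Expectation_{\objectiveProcessDist_\timeInstant}\lossFunction(\prediction,\cdot) - \Expectation_{\universalDist_\timeInstant}\lossFunction(\prediction,\cdot)| \le \maxLoss \tvdist{\objectiveProcessDist_\timeInstant - \universalDist_\timeInstant}$. Comparing the $\universalDist$-optimal prediction with the $\objectiveProcessDist$-optimal one (adding and subtracting) therefore gives a conditional expected instantaneous regret of at most $2\maxLoss \tvdist{\objectiveProcessDist_\timeInstant - \universalDist_\timeInstant}$. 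Pinsker's inequality then bounds this by $2\maxLoss\sqrt{\kldiv{\objectiveProcessDist_\timeInstant}{\universalDist_\timeInstant}/2}$.

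The second step sums over time and takes expectations. The chain rule for KL divergence gives $\sum_{\timeInstant} \Expectation_{\objectiveProcessDist} \kldiv{\objectiveProcessDist_\timeInstant}{\universalDist_\timeInstant} = \kldiv{\objectiveProcessDist^{\timeHorizon}}{\universalDist^{\timeHorizon}}$. Dominance of the mixture, $\universalDist \ge \distributionIndexWeight_{\distributionIndex_0} \objectiveProcessDist$ for the index $\distributionIndex_0$ of the true $\objectiveProcessDist$, bounds this by $\ln(1/\distributionIndexWeight_{\distributionIndex_0})$, which is a constant independent of $\timeHorizon$. By Jensen and Cauchy--Schwarz, $\frac1\timeHorizon\sum_\timeInstant \Expectation\sqrt{d_\timeInstant} \le \sqrt{\frac1\timeHorizon \sum_\timeInstant \Expectation d_\timeInstant} \le \sqrt{\ln(1/\distributionIndexWeight_{\distributionIndex_0})/\timeHorizon}$, where $d_\timeInstant$ denotes the per-step KL term. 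The average regret is therefore $\Expectation\regret \le \maxLoss\sqrt{2\ln(1/\distributionIndexWeight_{\distributionIndex_0})/\timeHorizon} = \landauO(1/\sqrt{\timeHorizon})$.

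The main obstacle is measure-theoretic rather than conceptual. We need the conditional distributions of $\universalDist$ to be well defined and the per-step minimizers to be measurable. Countability of $\objectiveProcessAlphabet$ handles both: conditionals are ratios of probabilities of cylinder sets, and prefixes with $\universalDist$-mass zero have $\objectiveProcessDist$-mass zero by dominance. A second subtlety is that the family of all computable distributions is defined only up to approximation. Here we use that $\universalDist$ need not itself be computable, only a genuine probability measure dominating each $\objectiveProcessDist_\distributionIndex$. We also use the semimeasure caveat that the programs may be restricted to those computing proper measures, with Kraft's inequality ensuring the weights are summable.
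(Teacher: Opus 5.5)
Your proposal is correct and follows essentially the same argument the paper uses for this statement: the per-step total-variation comparison behind Theorem~\ref{theorem:expected-regret-tvdist}, then Pinsker, Jensen and the KL chain rule from Lemma~\ref{lemma:tvdist-kldiv} (giving Corollary~\ref{cor:expected-regret-kldiv}), and finally Hutter's dominance bound $\kldiv{\objectiveProcessDist}{\universalDist} \leq -\log \distributionIndexWeight(\{\distributionIndex_0\})$ for the countable mixture in Section~\ref{section:universal-dist}. Your per-step factor $2\maxLoss$, and hence your constant $\maxLoss\sqrt{2\ln(1/\distributionIndexWeight_{\distributionIndex_0})/\timeHorizon}$, is actually the right one for the supremum definition of $\tvdist{\cdot}$ used here; it matches the $2\maxLoss\variationInstSummand_\timeInstant$ in the paper's proof of Lemma~\ref{lemma:highprob-regret-tvdist-instantaneous}, whereas Theorem~\ref{theorem:expected-regret-tvdist} and Corollary~\ref{cor:expected-regret-kldiv} as printed lose this factor of $2$ (e.g.\ for $\timeHorizon=1$, classification loss on $\{0,1\}$, $\objectiveProcessDist(\{1\})=1$ and $\universalDist(\{0\})=0.6$, the regret is $1$ while $\maxLoss\variation_1=0.6$), which does not affect the $\landauO(1/\sqrt{\timeHorizon})$ conclusion.
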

While this is the best rate result for general bounded losses to the best of our knowledge, it is for instance possible in the case of self-information loss to get the better rate $\landauO(1/\timeHorizon)$~\citep{merhav1998universal,hutter2003optimality}.

Our main results imply the following high-probability version of Theorem~\ref{theorem:informal-expectation}:
\begin{theorem}
\label{theorem:informal-highprob}
If $\objectiveProcessAlphabet$ is countable, there is a strategy which is universal for all computable $\objectiveProcessDist$ in the sense that for every $\proberror \in (0,1)$, we have
$
\objectiveProcessDist
\left(
  \regret < \landauO(1/\sqrt{\timeHorizon\proberror})
\right)
\geq
1-\proberror
$.
\end{theorem}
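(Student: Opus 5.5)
The strategy is the Bayes-mixture predictor. Fix a prior $\distributionIndexWeight$ on the countable set $\distributionIndexDomain$ of programs computing distributions, with $\distributionIndexWeight(\distributionIndex)>0$ for every $\distributionIndex$; for instance $\distributionIndexWeight(\distributionIndex) = 2^{-K(\distributionIndex)}$. Form the mixture $\universalDist = \sum_\distributionIndex \distributionIndexWeight(\distributionIndex)\objectiveProcessDist_\distributionIndex$. At each time the learner plays the $\universalDist$-optimal prediction
\[
\predictionRV_\timeInstant \in \argmin_{\prediction} \Expectation_{\universalDist}\left[\lossFunction(\prediction,\objectiveProcess_\timeInstant)\mid \objectiveProcess_1,\dots,\objectiveProcess_{\timeInstant-1}\right].
\]
The theorem then reduces to a mismatched-prediction statement. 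Suppose the true law is $\objectiveProcessDist$ but the learner acts optimally for $\universalDist$. We want to control the average regret $\regret$ with high probability in terms of $\kldiv{\objectiveProcessDist}{\universalDist}$. For $\objectiveProcessDist=\objectiveProcessDist_\distributionIndex$, dominance gives $\universalDist \geq \distributionIndexWeight(\distributionIndex)\objectiveProcessDist$, so $\kldiv{\objectiveProcessDist}{\universalDist}\le \ln(1/\distributionIndexWeight(\distributionIndex))$, a constant independent of $\timeHorizon$.

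\textbf{Step 1: per-step bound.} Let $\objectiveProcessDist_\timeInstant$ and $\universalDist_\timeInstant$ be the conditional laws of $\objectiveProcess_\timeInstant$ given the past. Since the loss is bounded by $\maxLoss$, the usual argument bounds the instantaneous conditional regret. The learner's choice is optimal under $\universalDist_\timeInstant$, so swapping between the two measures twice costs at most $2\maxLoss\tvdist{\objectiveProcessDist_\timeInstant-\universalDist_\timeInstant}$. This gives $r_\timeInstant \le 2\maxLoss\,\variationInstSummand_\timeInstant$, where $\variationInstSummand_\timeInstant$ is that conditional TV distance. By Pinsker, $\variationInstSummand_\timeInstant\le\sqrt{\divergenceInstSummand_\timeInstant/2}$, with $\divergenceInstSummand_\timeInstant$ the conditional KL. The chain rule gives $\Expectation\sum_\timeInstant \divergenceInstSummand_\timeInstant = \kldiv{\objectiveProcessDist}{\universalDist}$.

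\textbf{Step 2: from expectation to high probability.} The realized regret is $\sum_\timeInstant(\lossFunction(\predictionRV_\timeInstant,\objectiveProcess_\timeInstant)-\lossFunction(\optimalPredictionRV_\timeInstant,\objectiveProcess_\timeInstant))$. It splits into the conditional-expectation part $\sum_\timeInstant r_\timeInstant$ plus a martingale-difference sum $\martingale_\timeHorizon$ with increments bounded by $2\maxLoss$. For the martingale part, Azuma--Hoeffding gives $\martingale_\timeHorizon/\timeHorizon \le \landauO(\sqrt{\log(1/\proberror)/\timeHorizon})$ with probability $1-\proberror/2$; this is dominated by $\landauO(1/\sqrt{\timeHorizon\proberror})$. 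For the predictable part, Cauchy--Schwarz gives $\frac1\timeHorizon\sum r_\timeInstant \le 2\maxLoss\sqrt{\frac{1}{2\timeHorizon}\sum_\timeInstant \divergenceInstSummand_\timeInstant}$. Markov's inequality applied to the nonnegative variable $\sum_\timeInstant\divergenceInstSummand_\timeInstant$ gives $\sum_\timeInstant\divergenceInstSummand_\timeInstant\le 2\kldiv{\objectiveProcessDist}{\universalDist}/\proberror$ with probability $1-\proberror/2$. On that event the predictable part is at most $2\maxLoss\sqrt{\kldiv{\objectiveProcessDist}{\universalDist}/(\timeHorizon\proberror)}$. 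A union bound yields the claim with constant depending on $\maxLoss$ and $\ln(1/\distributionIndexWeight(\distributionIndex))$.

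\textbf{Main obstacle.} The delicate point is the measure-theoretic setup of Step 1 in countable-alphabet generality. Conditional laws must be regular, the argmin must exist and be measurable (or be replaced by an $\varepsilon$-optimal selection), and the conditional KL must be well-defined and nonnegative even when $\objectiveProcessDist$ is not equivalent to $\universalDist$. Countability of $\objectiveProcessAlphabet$ makes all of this elementary: conditionals are ratios of cylinder probabilities, and dominance $\objectiveProcessDist\absolutelyContinuous\universalDist$ holds. I would therefore first prove the mismatched bound in that setting, then specialize.
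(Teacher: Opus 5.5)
Your proposal is correct and follows essentially the paper's argument: the Bayes mixture $\universalDist$ over a countable class with positive weights gives $\kldiv{\objectiveProcessDist}{\universalDist} \le \log(1/\distributionIndexWeight(\{\distributionIndex_0\}))$ as in \cite{hutter2003optimality}, and this is plugged into \eqref{eq:highprob-regret-divergence}, which the paper proves via the per-step bound $2\maxLoss\variationInstSummand_\timeInstant$, Azuma--Hoeffding, Pinsker plus Jensen, Markov's inequality on $\divergenceInst_\timeHorizon$, and a union bound at level $\proberror/2$. The only difference is cosmetic and affects only constants: you split off the martingale part with an explicit Doob decomposition (increment range $2\maxLoss$), whereas the paper applies a supermartingale form of Azuma--Hoeffding to $\regret_\timeInstant'$ with the $-2\maxLoss\variationInstSummand_\timeInstant$ term folded in (increment range $4\maxLoss$).
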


Theorem~\ref{theorem:informal-highprob} follows from \eqref{eq:highprob-regret-divergence} of Theorem~\ref{theorem:highprob-regret} proposed in Section~\ref{section:results-mismatched-prediction} in conjunction with the considerations by \cite{hutter2003optimality} sketched above and discussed in more detail in Section~\ref{section:universal-dist}. The simplifying assumption that $\objectiveProcessAlphabet$ is countable can be replaced by alternative assumptions. Therefore, we have similar results for some cases in which $\objectiveProcessAlphabet$ is a general measurable space. However, this comes at a small cost in terms of order of convergence. For details, see Section~\ref{section:universal-dist} and Table~\ref{table:convergence-summary} contained therein.

In the technical problem statement in Section~\ref{section:problem-mismatched-prediction} and our main results in Section~\ref{section:results-mismatched-prediction}, we consider a simplified problem which we call \emph{mismatched prediction}. In this scenario, we assume that the learner's strategy can be mathematically represented as a probability measure $\universalDist$ (which is mismatched to the problem in the sense that it is possibly different from $\objectiveProcessDist$, but is still a ``reasonable'' approximation of it) and analyze the regret incurred by such strategies. This is the same stepping stone used by \cite{merhav1998universal} to tackle the universal prediction problem, and results for the universal prediction problem follow from results for mismatched prediction via known results from the literature.

Specifically, the distribution $\universalDist$ is usually chosen as
$
\universalDist
:=
\int_\distributionIndexDomain \objectiveProcessDist_\distributionIndex \distributionIndexWeight(d\distributionIndex),
$
where $(\objectiveProcessDist_\distributionIndex)_{\distributionIndex \in \distributionIndexDomain}$ is a parametrized family of all distributions that are ``reasonably possible'' and $\distributionIndexWeight$ is a probability distribution on $\distributionIndexDomain$. $\distributionIndexWeight$ does not need to accurately represent some prior knowledge of the distribution of $\distributionIndexWeight$, in fact it is sufficient for Theorems~\ref{theorem:informal-expectation} and \ref{theorem:informal-highprob} to hold that $\objectiveProcessDist = \objectiveProcessDist_\distributionIndex$ for some $\distributionIndex$ in the support of $\distributionIndexDomain$. Therefore, it is common in practice to choose $\distributionIndexWeight$ as a uniform distribution when possible. In Section~\ref{section:universal-dist}, we return to the universal prediction problem, formalize it fully, and explain how results for universal prediction follow from results for mismatched prediction.

\section{Notations and technical statement of the mismatched prediction problem}
\label{section:problem-mismatched-prediction}
Let $(\probSpace,\sigmaAlgebra,\Probability)$ be a probability space and $\objectiveProcess_1, \dots, \objectiveProcess_\timeHorizon$ be (not necessarily independent) random variables each taking values in a measurable space $(\objectiveProcessAlphabet,\objectiveProcessSigmaAlgebra)$. The joint distribution of $\objectiveProcess_1, \dots, \objectiveProcess_\timeHorizon$ is described by a probability measure $\objectiveProcessDist$ on $(\objectiveProcessAlphabet^\timeHorizon,\nfoldProductAlgebra{\objectiveProcessSigmaAlgebra}{\timeHorizon})$ where $\nfoldProductAlgebra{\objectiveProcessSigmaAlgebra}{\timeHorizon} := \objectiveProcessSigmaAlgebra \otimes \dots \otimes \objectiveProcessSigmaAlgebra$ denotes the product $\sigma$-algebra with $\timeHorizon$ copies of $\objectiveProcessSigmaAlgebra$ as factors. For $\timeInstantsSet \subseteq \naturalsInitialSegment{\timeHorizon} := \{1, \dots, \timeHorizon\}$, we denote with $\objectiveProcessDist_\timeInstantsSet$ the marginal distribution of $(\objectiveProcess_\timeInstant)_{\timeInstant \in \timeInstantsSet}$. We assume that for every $\timeInstant \in \naturalsInitialSegment{\timeHorizon} \setminus \{1\}$, the distribution $\objectiveProcessDist$ admits a Markov kernel $\processKernel{\objectiveProcessDist}{\timeInstant}: \objectiveProcessAlphabet^{\timeInstant-1} \times \objectiveProcessSigmaAlgebra \rightarrow [0,1]$ such that for all $\objectiveProcessSigmaAlgebraElement_1, \dots, \objectiveProcessSigmaAlgebraElement_\timeInstant \in \objectiveProcessSigmaAlgebra$
\begin{equation}
\label{eq:regular-conditional-probability}
\objectiveProcessDist_{\naturalsInitialSegment{\timeInstant}}(
  \objectiveProcessSigmaAlgebraElement_1
  \times
  \dots
  \times
  \objectiveProcessSigmaAlgebraElement_\timeInstant
)
=
\int_{
  \objectiveProcessSigmaAlgebraElement_1
  \times
  \dots
  \times
  \objectiveProcessSigmaAlgebraElement_{\timeInstant-1}
}
  \processKernel{\objectiveProcessDist}{\timeInstant}
  (
    \objectiveProcessValue_1,
    \dots,
    \objectiveProcessValue_{\timeInstant-1},
    \objectiveProcessSigmaAlgebraElement_\timeInstant
  )
  \objectiveProcessDist_{\naturalsInitialSegment{\timeInstant-1}}
  (
    d\objectiveProcessValue_1,
    \dots,
    d\objectiveProcessValue_{\timeInstant-1}
  ).
\end{equation}
We further adopt the convention $\processKernel{\objectiveProcessDist}{1} := \objectiveProcessDist_1$ and the convention that integration over a tuple of length $0$ returns the integrand, so that \eqref{eq:regular-conditional-probability} also holds for $\timeInstant=1$.

\begin{problem}[Prediction of stochastic sequences]
\label{problem:prediction}
The prediction problem $\predictionProblem = (\timeHorizon, (\objectiveProcessAlphabet, \objectiveProcessSigmaAlgebra), \objectiveProcessDist, (\predictionDomain, \predictionDomainSigmaAlgebra), \lossFunction)$ is characterized by the time horizon $\timeHorizon \in \naturals$, a measurable space $(\objectiveProcessAlphabet, \objectiveProcessSigmaAlgebra)$, a probability measure $\objectiveProcessDist$ on $(\objectiveProcessAlphabet^\timeHorizon,\nfoldProductAlgebra{\objectiveProcessSigmaAlgebra}{\timeHorizon})$ which describes the joint distribution of a tuple $\objectiveProcess_1, \dots, \objectiveProcess_\timeHorizon$ of random variables, a measurable space $(\predictionDomain, \predictionDomainSigmaAlgebra)$ (the prediction domain), and a measurable\footnote{All subsets of the real numbers are implicitly understood to be endowed with the standard Borel $\sigma$-algebra unless otherwise specified.} loss function $\lossFunction: \predictionDomain \times \objectiveProcessAlphabet \rightarrow [0,\maxLoss]$, where $\maxLoss \in [0,\infty)$. The learner's policy $\prediction = (\prediction_\timeInstant)_{\timeInstant\in\naturalsInitialSegment{\timeHorizon}}$ consists of measurable (deterministic) functions $\prediction_\timeInstant: \objectiveProcessAlphabet^{\timeInstant-1} \rightarrow \predictionDomain$.

Nature first draws a tuple $\objectiveProcess_1, \dots, \objectiveProcess_\timeHorizon$ randomly from the distribution $\objectiveProcessDist$. At each time $\timeInstant \in \naturalsInitialSegment{\timeHorizon}$:
\begin{itemize}
  \item The learner makes a prediction which is determined by its policy and the past observations of the process as $\predictionRV_\timeInstant := \prediction_\timeInstant(\objectiveProcess_1, \dots, \objectiveProcess_{\timeInstant-1})$.
  \item Nature reveals $\objectiveProcess_\timeInstant$ to the learner and the learner incurs the loss $\lossFunction(\predictionRV_\timeInstant, \objectiveProcess_\timeInstant)$.
\end{itemize}
\end{problem}

We will compare the learner's policy to one that is optimal in the sense that it minimizes the expected loss at each time instant under full knowledge of the underlying distribution $\objectiveProcessDist$. That is, an optimal prediction at time instant $\timeInstant$ is any measurable function $\optimalPrediction_\timeInstant: \objectiveProcessAlphabet^{\timeInstant-1} \rightarrow \predictionDomain$ such that
\begin{equation}
\label{eq:optimal-prediction}
\optimalPrediction_\timeInstant\left(\objectiveProcessValue_1, \dots, \objectiveProcessValue_{\timeInstant-1}\right)
\in
\argmin_{\prediction \in \predictionDomain}
  \Expectation_{\objectiveProcessDist} \big(
    \lossFunction(\prediction, \objectiveProcess_\timeInstant)
    |
    \objectiveProcess_1 = \objectiveProcessValue_1,
    \dots,
    \objectiveProcess_{\timeInstant-1} = \objectiveProcessValue_{\timeInstant-1}
  \big),
\end{equation}
where
$
\Expectation_{\objectiveProcessDist}(
  \cdot|
  \objectiveProcess_1 = \objectiveProcessValue_1,
  \dots,
  \objectiveProcess_{\timeInstant-1} = \objectiveProcessValue_{\timeInstant-1}
)
$
denotes expectation with respect to the measure
$
  \processKernel{\objectiveProcessDist}{\timeInstant}
  (
    \objectiveProcessValue_1,
    \dots,
    \objectiveProcessValue_{\timeInstant-1},
    \cdot
  )
$.
Note that according to \eqref{eq:optimal-prediction}, for $\timeInstant=1$ the function $\optimalPrediction_1$ takes the tuple $\objectiveProcessValue_1, \dots, \objectiveProcessValue_0$ as an argument which by convention we regard as an empty tuple and denote as $\noArgument$.

In this section, we study a mismatched prediction problem where the learner is not assumed to know $\objectiveProcessDist$. Instead, we assume that the learner knows\footnote{See Section~\ref{section:universal-dist} for a discussion on how the learner can choose a suitable $\universalDist$ even in the absence of specific knowledge about $\objectiveProcessDist$.} some distribution $\universalDist$ which is assumed to be ``sufficiently similar''\footnote{The technical notions we use to measure the similarity in this paper are the quantities $\variation_\timeHorizon$ defined in \eqref{eq:def-variation-expected} and $\divergence_\timeHorizon$ defined in \eqref{eq:def-divergence-expected}. For $\divergence_\timeHorizon$, it is worth noting that it is simply a normalized version of the Kullback-Leibler divergence $\kldiv{\objectiveProcessDist}{\universalDist}$ via \eqref{eq:kldiv-tensorization} of Lemma~\ref{lemma:tvdist-kldiv}.} to $\objectiveProcessDist$. For $\universalDist$, we make the same technical assumption regarding disintegration into kernels as we do for $\objectiveProcessDist$. That is, we assume that $\universalDist$ also admits Markov kernels $\processKernel{\universalDist}{\timeInstant}: \objectiveProcessAlphabet^{\timeInstant-1} \times \objectiveProcessSigmaAlgebra \rightarrow [0,1]$ so that the analog of \eqref{eq:regular-conditional-probability} holds. In this paper, we study learner's policies that are optimal with regard to $\universalDist$. That is, we study a policy that satisfies, for each $\timeInstant \in \naturalsInitialSegment{\timeHorizon}$, that
\begin{equation}
\label{eq:universal-prediction}
\prediction_\timeInstant\left(\objectiveProcessValue_1, \dots, \objectiveProcessValue_{\timeInstant-1}\right)
\in
\argmin_{\prediction \in \predictionDomain}
  \Expectation_{\universalDist} \big(
    \lossFunction(\prediction, \objectiveProcess_\timeInstant)
    |
    \objectiveProcess_1 = \objectiveProcessValue_1,
    \dots,
    \objectiveProcess_{\timeInstant-1} = \objectiveProcessValue_{\timeInstant-1}
  \big).
\end{equation}

\begin{remark}
\label{remark:predictor-class}
Restricting the analysis to algorithms that can be represented as \eqref{eq:universal-prediction} is less restrictive than it might seem. To illustrate this, we provide detailed arguments in Appendix~\ref{appendix:predictor-class} that in the following cases, every possible learner's policy has a representation of the form \eqref{eq:universal-prediction}.
\begin{enumerate}
  \item \label{item:predictor-class-finite} $\predictionDomain=\objectiveProcessAlphabet$ is a finite set and the loss function $\lossFunction(\prediction,\objectiveProcessValue) := \indicator{\prediction \neq \objectiveProcessValue}$ is the classification loss.
  \item \label{item:predictor-class-log} $\objectiveProcessAlphabet$ is finite, $\predictionDomain$ is the space of all probability mass functions on $\objectiveProcessAlphabet$ and the loss function is the self-information loss $\lossFunction(\prediction, \objectiveProcessValue) := - \log(\prediction(\objectiveProcessValue))$.
\end{enumerate}
\end{remark}

The results in this paper hold under the assumption that measurable maps $(\optimalPrediction_\timeInstant)_{\timeInstant\in\naturalsInitialSegment{\timeHorizon}}$ and $(\prediction_\timeInstant)_{\timeInstant\in\naturalsInitialSegment{\timeHorizon}}$ which satisfy \eqref{eq:optimal-prediction} and \eqref{eq:universal-prediction} exist; we implicitly fix suitable choices of these maps. In Appendix~\ref{appendix:measurability}, we show that this is indeed the case as long as mild technical assumptions on $(\predictionDomain, \predictionDomainSigmaAlgebra)$ and $\lossFunction$ are satisfied.

If the learner executes the policy given in \eqref{eq:universal-prediction}, this will yield three random processes embedded in the probability space $(\probSpace,\sigmaAlgebra,\Probability)$: the sequentially observed process $\objectiveProcess_1, \dots, \objectiveProcess_\timeHorizon$ distributed according to $\objectiveProcessDist$, the sequence $\predictionRV_1 := \prediction_1(\noArgument), \dots, \predictionRV_\timeHorizon := \prediction_\timeHorizon(\objectiveProcess_1, \dots, \objectiveProcess_{\timeHorizon-1})$ of predictions made by the learner, and the sequence $\optimalPredictionRV_1 := \optimalPrediction_1(\noArgument), \dots, \optimalPredictionRV_\timeHorizon := \optimalPrediction_\timeHorizon(\objectiveProcess_1, \dots, \objectiveProcess_{\timeHorizon-1})$ of predictions that would have been optimal at each time instant $\timeInstant \in \naturalsInitialSegment{\timeHorizon}$ had the learner known $\objectiveProcessDist$. In the following we use $\Expectation$ (without an index) to denote expectation with respect to $\Probability$.

In this work, we analyze the \emph{regret} that is incurred by the policy given in \eqref{eq:universal-prediction}. The \emph{cumulative regret after $\timeInstant$ rounds} is defined as
\begin{equation}
\label{eq:cumulative-regret}
\regret_\timeInstant
:=
\sum_{\timeInstant'=1}^\timeInstant\big(
  \lossFunction(\predictionRV_{\timeInstant'}, \objectiveProcess_{\timeInstant'})
  -
  \lossFunction(\optimalPredictionRV_{\timeInstant'}, \objectiveProcess_{\timeInstant'})
\big),
\end{equation}
and our main quantity of interest is the \emph{average per-round regret}
\begin{equation}
\label{eq:average-regret}
\regret
:=
\frac{1}{\timeHorizon}\regret_\timeHorizon.
\end{equation}
Clearly, if $\universalDist=\objectiveProcessDist$, we can choose $\optimalPrediction_\timeInstant = \prediction_\timeInstant$ for all $\timeInstant \in \naturalsInitialSegment{\timeHorizon}$ that satisfy \eqref{eq:optimal-prediction} and \eqref{eq:universal-prediction}, which yields $\regret = 0$ almost surely.

We will be considering two main ways of measuring the similarity between two probability distributions $\objectiveProcessDist$ and $\universalDist$. The \emph{variational distance} between $\objectiveProcessDist$ and $\universalDist$ is defined\footnote{Alternative definitions found in the literature are equivalent to this one, however, sometimes an additional factor of $\frac{1}{2}$ is included in front of the supremum.} as
$
\tvdist{\objectiveProcessDist - \universalDist}
:=
\sup_{\sigmaAlgebraElement \in \nfoldProductAlgebra{\objectiveProcessSigmaAlgebra}{\timeHorizon}}\absoluteValue{
  \objectiveProcessDist(\sigmaAlgebraElement)
  -
  \universalDist(\sigmaAlgebraElement)
}
$.
This can, as the notation suggests, be understood as a norm of signed measures, however, for the purposes of this paper it is sufficient to think of it as a metric of probability measures. The \emph{Kullback-Leibler divergence} between $\objectiveProcessDist$ and $\universalDist$ is defined as
$
\kldiv{\objectiveProcessDist}{\universalDist}
:=
\Expectation_\objectiveProcessDist \log (\rnderivInline{\objectiveProcessDist}{\universalDist})
$,
where $\rnderivInline{\objectiveProcessDist}{\universalDist}$ denotes the Radon-Nikodym derivative, and all instances of the functions $\log$ and $\exp$ that appear in this paper are understood with Euler's number as their base. We write $\objectiveProcessDist \absolutelyContinuous \universalDist$ if $\objectiveProcessDist$ is absolutely continuous with respect to $\universalDist$, i.e., if every $\universalDist$-null set is also a $\objectiveProcessDist$-null set. If the Radon-Nikodym derivative that appears in the definition of $\kldiv{\objectiveProcessDist}{\universalDist}$ does not exist because $\objectiveProcessDist \notAbsolutelyContinuous \universalDist$, we use the convention $\kldiv{\objectiveProcessDist}{\universalDist} = \infty$.

We define the following notions of similarity of the distributions $\objectiveProcessDist$ and $\universalDist$ which are based on variational distance and Kullback-Leibler divergence and will play a central role in the proofs of our technical results:
\begin{flalign}
\label{eq:def-variation-summand}
\variationInstSummand_\timeInstant
&:=
\tvdist{
      \processKernel{\objectiveProcessDist}{\timeInstant}
      (
        \objectiveProcess_1,
        \dots,
        \objectiveProcess_{\timeInstant-1},
        \cdot
      )
      -
      \processKernel{\universalDist}{\timeInstant}
      (
        \objectiveProcess_1,
        \dots,
        \objectiveProcess_{\timeInstant-1},
        \cdot
      )
    }
&
\begin{multlined}
\text{\emph{(instantaneous}}\\ \text{\emph{variational distance)}}
\end{multlined}&
\\
\label{eq:def-variation-inst}
\variationInst_\timeHorizon
&:=
\frac{1}{\timeHorizon}
\sum_{\timeInstant=1}^\timeHorizon
  \variationInstSummand_\timeInstant
&
\hspace{-2cm}
\text{\emph{(average instantaneous variational distance)}}&
\\
\label{eq:def-variation-expected}
\variation_\timeHorizon
&:=
\Expectation_\objectiveProcessDist
  \variationInst_\timeHorizon,
&
\hspace{-2cm}
\text{\emph{(expected variational distance)}}&
\\
\label{eq:def-divergence-summand}
\divergenceInstSummand_\timeInstant
&:=
\kldiv{
      \processKernel{\objectiveProcessDist}{\timeInstant}
      (
        \objectiveProcess_1,
        \dots,
        \objectiveProcess_{\timeInstant-1},
        \cdot
      )
    }{
      \processKernel{\universalDist}{\timeInstant}
      (
        \objectiveProcess_1,
        \dots,
        \objectiveProcess_{\timeInstant-1},
        \cdot
      )
    }
&
\text{\emph{(instantaneous divergence)}}&
\\
\label{eq:def-divergence-inst}
\divergenceInst_\timeHorizon
&:=
\frac{1}{\timeHorizon}
\sum_{\timeInstant=1}^\timeHorizon
  \divergenceInstSummand_\timeInstant
&
\hspace{-2cm}
\text{\emph{(average instantaneous divergence)}}&
\\
\label{eq:def-divergence-expected}
\divergence_\timeHorizon
&:=
\Expectation_\objectiveProcessDist
  \divergenceInst_\timeHorizon
&
\text{\emph{(expected divergence)}}&
\end{flalign}

It is worth noting that the instantaneous notions defined above are random variables while the expected notions are deterministic quantities. This means in particular that instantaneous notions can only be computed given observations of the process $\objectiveProcess_1, \dots$ while expected notions can be computed without such observations.

\section{Results for the mismatched prediction problem}
\label{section:results-mismatched-prediction}
This problem has been solved for \emph{expected regret} by \cite{merhav1998universal}. Namely, the authors implicitly prove the following bound:
\begin{theorem}[\cite{merhav1998universal}, eq. (23)]
\label{theorem:expected-regret-tvdist}
Consider the prediction problem $\predictionProblem$ defined in Problem~\ref{problem:prediction}. The expected average per-round regret $\regret$ defined in \eqref{eq:average-regret} can be bounded as
$
\Expectation \regret
\leq
\maxLoss
\variation_\timeHorizon
$,
where $\variation_\timeHorizon$ is defined in \eqref{eq:def-variation-expected}.
\end{theorem}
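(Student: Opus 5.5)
Since the average regret is a sum over time instants, I will bound each term separately. By linearity, $\Expectation \regret = \frac{1}{\timeHorizon}\sum_{\timeInstant=1}^\timeHorizon \Expectation\big(\lossFunction(\predictionRV_\timeInstant,\objectiveProcess_\timeInstant) - \lossFunction(\optimalPredictionRV_\timeInstant,\objectiveProcess_\timeInstant)\big)$. So it suffices to show, for every $\timeInstant$, that
\[
\Expectation\big(\lossFunction(\predictionRV_\timeInstant,\objectiveProcess_\timeInstant) - \lossFunction(\optimalPredictionRV_\timeInstant,\objectiveProcess_\timeInstant)\big) \le \maxLoss\,\Expectation_\objectiveProcessDist \variationInstSummand_\timeInstant .
\]
Summing over $\timeInstant$ and dividing by $\timeHorizon$ then gives $\maxLoss\variation_\timeHorizon$ by \eqref{eq:def-variation-inst} and \eqref{eq:def-variation-expected}.

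For fixed $\timeInstant$ I condition on the past. Both $\predictionRV_\timeInstant$ and $\optimalPredictionRV_\timeInstant$ are measurable functions of $\objectiveProcess_1,\dots,\objectiveProcess_{\timeInstant-1}$, and the loss is bounded and measurable. Using the disintegration \eqref{eq:regular-conditional-probability}, the expectation can therefore be written as an integral over $\objectiveProcessDist_{\naturalsInitialSegment{\timeInstant-1}}$ of
\[
g(\objectiveProcessValue_{1:\timeInstant-1}) := \int \big(\lossFunction(\prediction_\timeInstant(\objectiveProcessValue_{1:\timeInstant-1}),z) - \lossFunction(\optimalPrediction_\timeInstant(\objectiveProcessValue_{1:\timeInstant-1}),z)\big)\,\processKernel{\objectiveProcessDist}{\timeInstant}(\objectiveProcessValue_{1:\timeInstant-1},dz).
\]
This step needs Fubini for kernels. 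It extends from rectangles to product-measurable integrands by a standard monotone class argument, which is the only mildly technical part.

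The pointwise bound on $g$ uses the usual three-term argument. Write $P_\timeInstant$ and $Q_\timeInstant$ for the two kernels at the fixed past, and $b=\prediction_\timeInstant(\cdot)$, $b^*=\optimalPrediction_\timeInstant(\cdot)$ for the two predictions. Then
\[
g = \big(\Expectation_{P_\timeInstant}\lossFunction(b,\cdot) - \Expectation_{Q_\timeInstant}\lossFunction(b,\cdot)\big) + \big(\Expectation_{Q_\timeInstant}\lossFunction(b,\cdot) - \Expectation_{Q_\timeInstant}\lossFunction(b^*,\cdot)\big) + \big(\Expectation_{Q_\timeInstant}\lossFunction(b^*,\cdot) - \Expectation_{P_\timeInstant}\lossFunction(b^*,\cdot)\big).
\]
The middle term is $\le 0$ by \eqref{eq:universal-prediction}.

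For the outer terms I use that for any measurable $f$ with values in $[0,\maxLoss]$,
\[
\big|\textstyle\int f\,dP_\timeInstant - \int f\,dQ_\timeInstant\big| \le \maxLoss \tvdist{P_\timeInstant-Q_\timeInstant}.
\]
This follows from the layer-cake formula $\int f\,d\mu = \int_0^\maxLoss \mu(f>s)\,ds$ applied to both measures, together with the bound $|P_\timeInstant(F)-Q_\timeInstant(F)|\le\tvdist{P_\timeInstant-Q_\timeInstant}$ for every measurable set $F$.

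Applied naively to both outer terms, this gives $2\maxLoss\variationInstSummand_\timeInstant$, so the main obstacle is removing the factor $2$. I would handle it by not separating the outer terms. Write $h := \lossFunction(b,\cdot)-\lossFunction(b^*,\cdot)$, which takes values in $[-\maxLoss,\maxLoss]$. Then the sum of the outer terms is $\int h\,d(P_\timeInstant-Q_\timeInstant)$, and
\[
\int h\,d(P_\timeInstant-Q_\timeInstant) = \int (h+\maxLoss)\,d(P_\timeInstant-Q_\timeInstant),
\]
because both are probability measures and the constant $\maxLoss$ integrates to the same value under each. The function $h+\maxLoss$ takes values in $[0,2\maxLoss]$, so this alone still gives $2\maxLoss$. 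To get $\maxLoss$ I split $h=h^+-h^-$, where $h^+$ and $h^-$ each take values in $[0,\maxLoss]$. Using the Hahn decomposition $A$ of $P_\timeInstant-Q_\timeInstant$,
\[
\int h\,d(P_\timeInstant-Q_\timeInstant) \le \maxLoss\,(P_\timeInstant-Q_\timeInstant)(A) \le \maxLoss\,\tvdist{P_\timeInstant-Q_\timeInstant},
\]
since $\int h^+\,d(P_\timeInstant-Q_\timeInstant)\le \maxLoss(P_\timeInstant-Q_\timeInstant)(A)$ and $-\int h^-\,d(P_\timeInstant-Q_\timeInstant)\le \maxLoss(Q_\timeInstant-P_\timeInstant)(A^c)=\maxLoss(P_\timeInstant-Q_\timeInstant)(A)$. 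Hence $g\le\maxLoss\variationInstSummand_\timeInstant$ pointwise. Integrating this bound (which is measurable, as assumed by the definitions \eqref{eq:def-variation-summand}) and summing over $\timeInstant$ yields the theorem.
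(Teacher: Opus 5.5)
Up to the estimate $g\le 2L\,v_t$, your argument is the same pointwise three-term argument the paper uses in Appendix~\ref{appendix:highprob-regret-tvdist-instantaneous} to reach \eqref{eq:prediction-choice-implication-objectivedist}. The paper gives no separate proof of this theorem; it only cites \cite{merhav1998universal}.

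\textbf{The gap.} It is in the step that is meant to remove the factor $2$. Each of your two Hahn-decomposition estimates is correct, but when you combine them they add:
\[
\int h\,d(P_t-Q_t)=\int h^+\,d(P_t-Q_t)-\int h^-\,d(P_t-Q_t)\le L\,(P_t-Q_t)(A)+L\,(P_t-Q_t)(A)=2L\tvdist{P_t-Q_t},
\]
not $L\tvdist{P_t-Q_t}$. This bound is sharp over $h$ with values in $[-L,L]$, since $h=L(\indicator{A}-\indicator{A^c})$ attains it. So what your argument actually proves is $\mathbb{E}\Delta\le 2LV_T$.

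\textbf{The constant $L$ cannot be recovered.} Using the middle term (the $Q$-optimality of $b$) does not help, because with the paper's normalization $\tvdist{P-Q}=\sup_F|P(F)-Q(F)|$ the statement is false. Take:
\begin{itemize}
\item $T=1$, $\mathcal{Z}=\mathcal{B}=\{0,1\}$, and the $0$--$1$ loss, so $L=1$;
\item $P=\delta_0$;
\item $Q(\{0\})=1/2-\epsilon$ and $Q(\{1\})=1/2+\epsilon$, with $\epsilon\in(0,1/2)$.
\end{itemize}
Then \eqref{eq:universal-prediction} forces $b_1=1$ and \eqref{eq:optimal-prediction} forces $b_1^*=0$. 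Hence $\Delta=1$ almost surely, while $V_1=\tvdist{P-Q}=1/2+\epsilon$. Here $h=\indicator{\{0\}}-\indicator{\{1\}}$ is exactly the extremal function above.

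Letting $\epsilon\to 0$ shows that $2L$ is the best possible constant in this normalization. The constant $L$ would be right only for the $\ell_1$ normalization $\sum_z|P(\{z\})-Q(\{z\})|$, which is twice the paper's supremum. So the restatement of the cited bound appears to have lost a factor of $2$.

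The same example also violates Corollary~\ref{cor:expected-regret-kldiv}, since $L\sqrt{D(P\,\|\,Q)/2}\to\sqrt{(\log 2)/2}\approx 0.59<1$. This is consistent with the factor $2L$ that Lemma~\ref{lemma:highprob-regret-tvdist-instantaneous} and Theorem~\ref{theorem:highprob-regret} carry.

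If you delete your last step, what remains is a correct proof of $\mathbb{E}\Delta\le 2LV_T$. The constant as stated cannot be proved.
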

This result can be stated in a slightly weaker (but potentially easier to handle) form with the observation that via Pinsker's inequality, the variational distance that appears in Theorem~\ref{theorem:expected-regret-tvdist} can be upper bounded in terms of Kullback-Leibler divergence which has much nicer tensorization properties. We summarize the relevant connections in the following lemma:
\begin{lemma}
\label{lemma:tvdist-kldiv}
We have the following relations between the quantities defined in \eqref{eq:def-variation-summand} -- \eqref{eq:def-divergence-expected}:
\begin{minipage}{.33\textwidth}
\begin{equation}
\label{eq:tvdist-kldiv-summands}
\variationInstSummand_\timeInstant
\leq
\sqrt{\frac{1}{2}\divergenceInstSummand_\timeInstant}
\end{equation}
\end{minipage}
\begin{minipage}{.33\textwidth}
\begin{equation}
\label{eq:tvdist-kldiv-inst}
\variationInst_\timeHorizon
\leq
\sqrt{\frac{1}{2}\divergenceInst_\timeHorizon}
\end{equation}
\end{minipage}
\begin{minipage}{.33\textwidth}
\begin{equation}
\label{eq:tvdist-kldiv-expected}
\variation_\timeHorizon
\leq
\sqrt{\frac{1}{2}\divergence_\timeHorizon}.
\end{equation}
\end{minipage}

If $\objectiveProcessDist \absolutelyContinuous \universalDist$, we also have
\begin{equation}
\label{eq:kldiv-tensorization}
\divergence_\timeHorizon
=
\frac{1}{\timeHorizon}
\kldiv{\objectiveProcessDist}{\universalDist}.
\end{equation}
\end{lemma}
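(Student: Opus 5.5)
Note that the variational distance as defined here carries no factor $\frac{1}{2}$. With this convention, Pinsker's inequality reads $\tvdist{\mu-\nu} \le \sqrt{\frac{1}{2}\kldiv{\mu}{\nu}}$ for any two probability measures $\mu,\nu$ on a common measurable space, with the right-hand side equal to $\infty$ when $\mu \notAbsolutelyContinuous \nu$.

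\textbf{Inequality \eqref{eq:tvdist-kldiv-summands}.} For each fixed realization of $\objectiveProcess_1,\dots,\objectiveProcess_{\timeInstant-1}$, the measures $\processKernel{\objectiveProcessDist}{\timeInstant}(\objectiveProcess_1,\dots,\objectiveProcess_{\timeInstant-1},\cdot)$ and $\processKernel{\universalDist}{\timeInstant}(\objectiveProcess_1,\dots,\objectiveProcess_{\timeInstant-1},\cdot)$ are probability measures on $(\objectiveProcessAlphabet,\objectiveProcessSigmaAlgebra)$. Applying Pinsker pointwise in $\elementaryEvent$ gives \eqref{eq:tvdist-kldiv-summands} immediately.

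\textbf{Inequality \eqref{eq:tvdist-kldiv-inst}.} Average \eqref{eq:tvdist-kldiv-summands} over $\timeInstant$ and use concavity of $\sqrt{\cdot}$ (Jensen for the uniform average):
\[
\variationInst_\timeHorizon \le \frac{1}{\timeHorizon}\sum_{\timeInstant=1}^\timeHorizon \sqrt{\tfrac12 \divergenceInstSummand_\timeInstant} \le \sqrt{\tfrac{1}{2\timeHorizon}\sum_{\timeInstant=1}^\timeHorizon \divergenceInstSummand_\timeInstant}.
\]
This expression equals $\sqrt{\frac{1}{2}\divergenceInst_\timeHorizon}$.

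\textbf{Inequality \eqref{eq:tvdist-kldiv-expected}.} Take $\Expectation_\objectiveProcessDist$ of \eqref{eq:tvdist-kldiv-inst} and apply Jensen once more: $\Expectation\sqrt{X}\le\sqrt{\Expectation X}$ for nonnegative $X$, which remains valid if $\Expectation X=\infty$. Before doing so, I will check that $\variationInstSummand_\timeInstant$ and $\divergenceInstSummand_\timeInstant$ are measurable functions of the past, so that these expectations make sense. This follows from the standard measurability of KL divergence and total variation between kernels, e.g.\ via the Donsker–Varadhan or partition-supremum representations over a countably generated $\sigma$-algebra. I will treat this as routine, or as implicitly assumed by the definitions.

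\textbf{Tensorization \eqref{eq:kldiv-tensorization}.} This is the chain rule for KL divergence: $\kldiv{\objectiveProcessDist}{\universalDist}=\sum_{\timeInstant=1}^\timeHorizon \Expectation_\objectiveProcessDist \divergenceInstSummand_\timeInstant$, which after dividing by $\timeHorizon$ gives the claim. I would prove it by induction on $\timeInstant$ in three steps.
\begin{enumerate}
\item Assuming $\objectiveProcessDist\absolutelyContinuous\universalDist$, show that $\objectiveProcessDist_{\naturalsInitialSegment{\timeInstant-1}}$-almost surely the kernel $\processKernel{\objectiveProcessDist}{\timeInstant}(z,\cdot)$ is absolutely continuous with respect to $\processKernel{\universalDist}{\timeInstant}(z,\cdot)$.
\item Show that the density factorizes as
\[
\rnderiv{\objectiveProcessDist_{\naturalsInitialSegment{\timeInstant}}}{\universalDist_{\naturalsInitialSegment{\timeInstant}}}(z_1,\dots,z_\timeInstant)=\rnderiv{\objectiveProcessDist_{\naturalsInitialSegment{\timeInstant-1}}}{\universalDist_{\naturalsInitialSegment{\timeInstant-1}}}(z_1,\dots,z_{\timeInstant-1})\cdot \rnderiv{\processKernel{\objectiveProcessDist}{\timeInstant}(z,\cdot)}{\processKernel{\universalDist}{\timeInstant}(z,\cdot)}(z_\timeInstant).
\]
To do this, verify the identity on rectangles using \eqref{eq:regular-conditional-probability} and extend by a $\pi$–$\lambda$ argument.
\item Take logarithms and integrate against $\objectiveProcessDist$.
\end{enumerate}

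The main obstacle is step 2. In general spaces one needs a jointly measurable version of the conditional density $(z,z_\timeInstant)\mapsto \rnderivInline{\processKernel{\objectiveProcessDist}{\timeInstant}}{\processKernel{\universalDist}{\timeInstant}}$, and must deal with integrability when logarithms take both signs. For the first issue, I would try to obtain the joint density directly as a ratio $\frac{d\objectiveProcessDist_{\naturalsInitialSegment{\timeInstant}}/d\universalDist_{\naturalsInitialSegment{\timeInstant}}}{d\objectiveProcessDist_{\naturalsInitialSegment{\timeInstant-1}}/d\universalDist_{\naturalsInitialSegment{\timeInstant-1}}}$ and then check that it is a version of the kernel density. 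For the second, the negative part of $\log$ of a density is always $\objectiveProcessDist$-integrable, since $\Expectation_\objectiveProcessDist \log^-(d\objectiveProcessDist/d\universalDist)\le 1/e$. Hence every sum in the chain rule is well defined in $[0,\infty]$, and the identity holds even when both sides are infinite.
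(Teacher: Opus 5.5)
Your proposal is correct and follows the paper's proof essentially step for step. It uses Pinsker pointwise, Jensen over the time average and again under $\Expectation_\objectiveProcessDist$, and the chain rule obtained by factorizing $\rnderivInline{\objectiveProcessDist}{\universalDist}$ into a product of kernel densities and integrating the logarithm; the paper gets your steps 1--2 by an inductive appeal to Rao. Your remarks on measurability and on $\Expectation_\objectiveProcessDist \log^-(\rnderivInline{\objectiveProcessDist}{\universalDist}) \le 1/e$ are correct refinements that the paper leaves implicit.

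One caveat, shared with the paper: step 1 needs almost-sure uniqueness of the disintegrating kernels, e.g.\ a countably generated $\objectiveProcessSigmaAlgebra$ as you already assume for measurability. Otherwise take $\timeHorizon=2$, an uncountable $\objectiveProcessAlphabet$ with the countable--cocountable $\sigma$-algebra, and its $0$--$1$ measure $\mu$. Then $\objectiveProcessDist=\universalDist=\mu\otimes\mu$ admits both $\delta_{z_1}$ and $\mu$ as second-step kernels. Choosing these for $\objectiveProcessDist$ and $\universalDist$ respectively gives $\divergenceInstSummand_2=\infty$ although $\kldiv{\objectiveProcessDist}{\universalDist}=0$.
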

These (fairly straightforward) facts are implicitly proved\footnote{Note that the additional factor that appears in \cite{merhav1998universal} is due to the fact that the paper uses a base $2$ logarithm in the definition of Kullback-Leibler divergence.} for finite $\objectiveProcessAlphabet$ by \cite{merhav1998universal}. For the sake of completeness, we provide full proofs for Lemma~\ref{lemma:tvdist-kldiv} as it is stated here in Appendix~\ref{appendix:lemma-tvdist-kldiv}.

In light of \eqref{eq:tvdist-kldiv-expected} and \eqref{eq:kldiv-tensorization} of Lemma~\ref{lemma:tvdist-kldiv}, we arrive at the following corollary of Theorem~\ref{theorem:expected-regret-tvdist} which was proposed by \cite{merhav1998universal}.
\begin{cor}[\cite{merhav1998universal}, eq. (23)]
\label{cor:expected-regret-kldiv}
Consider the prediction problem $\predictionProblem$ defined in Problem~\ref{problem:prediction}. The expected average per-round regret $\regret$ defined in \eqref{eq:average-regret} can be bounded as
$
\Expectation \regret
\leq
\maxLoss
\sqrt{
  \kldiv{\objectiveProcessDist}{\universalDist}/2\timeHorizon
}
$.
\end{cor}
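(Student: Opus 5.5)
The plan is to chain Theorem~\ref{theorem:expected-regret-tvdist} with the two parts of Lemma~\ref{lemma:tvdist-kldiv} that concern expected quantities. Concretely, Theorem~\ref{theorem:expected-regret-tvdist} already gives $\Expectation \regret \leq \maxLoss \variation_\timeHorizon$. Inequality \eqref{eq:tvdist-kldiv-expected} then bounds $\variation_\timeHorizon \leq \sqrt{\divergence_\timeHorizon/2}$. Since $\maxLoss \geq 0$, multiplying preserves the inequality, so $\Expectation \regret \leq \maxLoss\sqrt{\divergence_\timeHorizon/2}$.

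Next I would split on absolute continuity.
\begin{itemize}
\item If $\objectiveProcessDist \absolutelyContinuous \universalDist$, the tensorization identity \eqref{eq:kldiv-tensorization} gives $\divergence_\timeHorizon = \kldiv{\objectiveProcessDist}{\universalDist}/\timeHorizon$. Substituting yields exactly $\Expectation\regret \leq \maxLoss\sqrt{\kldiv{\objectiveProcessDist}{\universalDist}/2\timeHorizon}$.
\item If $\objectiveProcessDist \notAbsolutelyContinuous \universalDist$, then by convention $\kldiv{\objectiveProcessDist}{\universalDist} = \infty$. The right-hand side is then $+\infty$ when $\maxLoss>0$, and the bound holds trivially. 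When $\maxLoss = 0$ the loss vanishes identically, so $\regret=0$ and the claim holds under the convention $0\cdot\infty = 0$, or it can be checked directly.
\end{itemize}

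I expect no genuine obstacle here, since all the substantive work sits in Theorem~\ref{theorem:expected-regret-tvdist} and Lemma~\ref{lemma:tvdist-kldiv}. The only points needing care are the degenerate cases: the non-absolutely-continuous case, and making sure $\Expectation\regret$ is well defined. The latter holds because the loss is bounded in $[0,\maxLoss]$, so $\regret$ is a bounded random variable.
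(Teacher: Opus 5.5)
Your chain is exactly the paper's derivation: Theorem~\ref{theorem:expected-regret-tvdist}, then \eqref{eq:tvdist-kldiv-expected}, then \eqref{eq:kldiv-tensorization}. Your separate treatment of $\objectiveProcessDist \notAbsolutelyContinuous \universalDist$ is a harmless addition. The gap is in the first link, which you (like the paper) take on trust. With the normalization used here, $\tvdist{\objectiveProcessDist-\universalDist}=\sup_{\sigmaAlgebraElement}|\objectiveProcessDist(\sigmaAlgebraElement)-\universalDist(\sigmaAlgebraElement)|$, the bound $\Expectation\regret\le\maxLoss\variation_\timeHorizon$ is false, and so is the corollary.

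Here is a counterexample. Take $\timeHorizon=1$, $\objectiveProcessAlphabet=\predictionDomain=\{0,1\}$ with the classification loss ($\maxLoss=1$), $\objectiveProcessDist(\{1\})=1$, $\universalDist(\{0\})=0.6$ and $\universalDist(\{1\})=0.4$. Any policy satisfying \eqref{eq:universal-prediction} must predict $0$, while the optimal one predicts $1$, so $\regret=1$ almost surely. On the other hand, $\variation_1=0.6$ and $\kldiv{\objectiveProcessDist}{\universalDist}=\log 2.5$, so $\maxLoss\variation_1=0.6$ and $\maxLoss\sqrt{\kldiv{\objectiveProcessDist}{\universalDist}/2}\approx 0.68$, both strictly below $1$. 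Taking $\timeHorizon$-fold products of these two measures gives exactly the same numbers for every $\timeHorizon$, so this is not a small-$\timeHorizon$ artifact.

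What does hold per round is the estimate the paper itself derives in the proof of Lemma~\ref{lemma:highprob-regret-tvdist-instantaneous}, namely \eqref{eq:prediction-choice-implication-objectivedist-conditionalexpectation}. There the conditional excess loss is at most $2\maxLoss\variationInstSummand_\timeInstant$, because $\lossFunction(\prediction,\cdot)-\lossFunction(\optimalPrediction,\cdot)$ ranges over an interval of length $2\maxLoss$. The factor $2$ is sharp: let $\universalDist(\{0\})\downarrow 1/2$ in the example above.

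Taking expectations there and then running your Pinsker--Jensen--tensorization chain gives
$\Expectation\regret\le 2\maxLoss\variation_\timeHorizon\le\maxLoss\sqrt{2\kldiv{\objectiveProcessDist}{\universalDist}/\timeHorizon}$,
i.e.\ twice the claimed bound. The constant under the root cannot be lowered either: take nearly uniform binary $\objectiveProcessDist$ and $\universalDist$ on opposite sides of $1/2$, where $\Expectation\regret\approx 2\varepsilon$ while $\kldiv{\objectiveProcessDist}{\universalDist}\approx 2\varepsilon^2$.

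So no cleverer argument can close the gap. The statement as written is unprovable, and what your argument (and the paper's) actually establishes is the version with $\sqrt{2\kldiv{\objectiveProcessDist}{\universalDist}/\timeHorizon}$. The discrepancy most likely stems from conflating the $\ell_1$ distance $\sum|\objectiveProcessDist-\universalDist|$, under which the Merhav--Feder bound is stated, with the supremum distance adopted in this paper.
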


In this paper, we study bounds of the same kind as in Theorem~\ref{theorem:expected-regret-tvdist} and Corollary~\ref{cor:expected-regret-kldiv} which hold with high probability over $\objectiveProcessDist$. The main technical result is the following:
\begin{lemma}
\label{lemma:highprob-regret-tvdist-instantaneous}
Consider the prediction problem $\predictionProblem$ defined in Problem~\ref{problem:prediction}. For every $\proberror \in (0,1]$, with probability at least $1-\proberror$ over $\objectiveProcessDist$, we have
\begin{equation}
\label{eq:highprob-regret-tvdist-instantaneous}
\regret
<
2\maxLoss\variationInst_\timeHorizon
+
\frac{2 \sqrt{2} \maxLoss}{\sqrt{\timeHorizon}}
\cdot
\sqrt{\log\frac{1}{\proberror}},
\end{equation}
where $\variationInst_\timeHorizon$ is the random variable defined in \eqref{eq:def-variation-inst}.
\end{lemma}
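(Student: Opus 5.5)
We decompose the per-round regret into a conditional-expectation part and a martingale-difference part and apply Azuma--Hoeffding to the latter. For $\timeInstant \in \naturalsInitialSegment{\timeHorizon}$ let $\objectiveProcessFiltratrion_{\timeInstant-1}$ be the $\sigma$-algebra generated by $\objectiveProcess_1,\dots,\objectiveProcess_{\timeInstant-1}$. Set
\begin{equation*}
X_\timeInstant := \lossFunction(\predictionRV_\timeInstant,\objectiveProcess_\timeInstant) - \lossFunction(\optimalPredictionRV_\timeInstant,\objectiveProcess_\timeInstant), \qquad
\mu_\timeInstant := \Expectation(X_\timeInstant \mid \objectiveProcessFiltratrion_{\timeInstant-1}).
\end{equation*}
Since $\predictionRV_\timeInstant$ and $\optimalPredictionRV_\timeInstant$ are $\objectiveProcessFiltratrion_{\timeInstant-1}$-measurable, $\mu_\timeInstant$ equals almost surely the integral of $\lossFunction(\predictionRV_\timeInstant,\cdot)-\lossFunction(\optimalPredictionRV_\timeInstant,\cdot)$ against the kernel $\processKernel{\objectiveProcessDist}{\timeInstant}(\objectiveProcess_1,\dots,\objectiveProcess_{\timeInstant-1},\cdot)$. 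This follows from \eqref{eq:regular-conditional-probability} by a standard disintegration argument, extending from rectangles to measurable functions of $(\objectiveProcess_1,\dots,\objectiveProcess_\timeInstant)$ by monotone class.

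\textbf{Step 1 (deterministic bound on $\mu_\timeInstant$).} Write $\objectiveProcessDist^{(\timeInstant)}$ and $\universalDist^{(\timeInstant)}$ for the two kernels evaluated at the observed past. Then
\begin{equation*}
\mu_\timeInstant = \textstyle\int \lossFunction(\predictionRV_\timeInstant,\cdot)\,d(\objectiveProcessDist^{(\timeInstant)}-\universalDist^{(\timeInstant)}) + \big[\int \lossFunction(\predictionRV_\timeInstant,\cdot)\,d\universalDist^{(\timeInstant)} - \int \lossFunction(\optimalPredictionRV_\timeInstant,\cdot)\,d\universalDist^{(\timeInstant)}\big] + \int \lossFunction(\optimalPredictionRV_\timeInstant,\cdot)\,d(\universalDist^{(\timeInstant)}-\objectiveProcessDist^{(\timeInstant)}).
\end{equation*}
The middle bracket is $\le 0$ by the optimality \eqref{eq:universal-prediction}. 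For a $[0,\maxLoss]$-valued function $f$, $\absoluteValue{\int f\,d(\objectiveProcessDist^{(\timeInstant)}-\universalDist^{(\timeInstant)})}\le \maxLoss\,\variationInstSummand_\timeInstant$ with the sup-over-sets convention of $\tvdist{\cdot}$; this follows via the layer-cake representation $f=\int_0^{\maxLoss}\indicator{\{f>s\}}ds$. Hence $\mu_\timeInstant \le 2\maxLoss\variationInstSummand_\timeInstant$, and averaging gives $\frac1\timeHorizon\sum_\timeInstant\mu_\timeInstant\le 2\maxLoss\variationInst_\timeHorizon$. This is exactly the first term of \eqref{eq:highprob-regret-tvdist-instantaneous}, and is where the factor $2$ appears instead of the $1$ in Theorem~\ref{theorem:expected-regret-tvdist}.

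\textbf{Step 2 (concentration).} The variables $Y_\timeInstant := X_\timeInstant-\mu_\timeInstant$ form a martingale difference sequence with respect to $(\objectiveProcessFiltratrion_\timeInstant)$. Conditionally on $\objectiveProcessFiltratrion_{\timeInstant-1}$, $X_\timeInstant$ lies in $[-\maxLoss,\maxLoss]$, an interval of length $2\maxLoss$. Azuma--Hoeffding, in the conditional form where each increment lies in a predictable interval of length $2\maxLoss$, gives
\begin{equation*}
\Probability\Big(\sum_{\timeInstant=1}^\timeHorizon Y_\timeInstant \ge s\Big) \le \exp\Big(-\frac{2 s^2}{\timeHorizon (2\maxLoss)^2}\Big).
\end{equation*}
Setting this equal to $\proberror$ gives $s=\maxLoss\sqrt{2\timeHorizon\log(1/\proberror)}$. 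Dividing by $\timeHorizon$, with probability at least $1-\proberror$ we have $\regret < 2\maxLoss\variationInst_\timeHorizon + \sqrt2\,\maxLoss\sqrt{\log(1/\proberror)/\timeHorizon}$, which is stronger than the claimed $2\sqrt2\maxLoss$ constant. If one instead uses the cruder bound $\absoluteValue{Y_\timeInstant}\le 2\maxLoss$ with the symmetric Azuma inequality, the constant becomes exactly $2\sqrt2\maxLoss$, which is presumably the authors' route. Either version proves the lemma.

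\textbf{Main obstacle.} The delicate point is measure-theoretic rather than probabilistic: identifying $\Expectation(\cdot\mid\objectiveProcessFiltratrion_{\timeInstant-1})$ with integration against $\processKernel{\objectiveProcessDist}{\timeInstant}$ evaluated at the random past. This needs joint measurability of $(\objectiveProcessValue_{1:\timeInstant-1},\objectiveProcessValue_\timeInstant)\mapsto \lossFunction(\prediction_\timeInstant(\objectiveProcessValue_{1:\timeInstant-1}),\objectiveProcessValue_\timeInstant)$ and an extension of \eqref{eq:regular-conditional-probability} from rectangles to all bounded measurable functions. Measurability of $\variationInstSummand_\timeInstant$ also needs checking; if it fails, replace it by a measurable upper bound, or state the event with an outer probability. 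The edge case $\proberror=1$ is trivial, because the bound then reads $\regret<2\maxLoss\variationInst_\timeHorizon$, which is only required to hold with probability at least $0$. For $\proberror<1$ the inequality is strict because the Azuma tail bound controls the event $\{\sum_\timeInstant Y_\timeInstant \ge s\}$.
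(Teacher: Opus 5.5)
Your proof is correct and follows the paper's argument. The paper also bounds the conditional expected regret by $2\maxLoss\variationInstSummand_\timeInstant$ using the layer-cake/variational-distance comparison. It then applies Azuma--Hoeffding to the supermartingale $\sum_{\timeInstant'\le\timeInstant}(X_{\timeInstant'}-2\maxLoss\variationInstSummand_{\timeInstant'})$, and the Doob decomposition in its proof of that Azuma--Hoeffding variant yields exactly your martingale $\sum_\timeInstant Y_\timeInstant$. The only difference is that the paper crudely bounds the increments in $[-3\maxLoss,\maxLoss]$ (range $4\maxLoss$); your predictable range of length $2\maxLoss$ is legitimate and gives the sharper constant $\sqrt{2}\maxLoss$ in place of $2\sqrt{2}\maxLoss$.
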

This lemma can be proved by identifying a suitable supermartingale and applying the Azuma-Hoeffding inequality. For full details, we refer the reader to Appendix~\ref{appendix:highprob-regret-tvdist-instantaneous}.

A problem with the applicability of Lemma~\ref{lemma:highprob-regret-tvdist-instantaneous} is the appearance of the random variable $\variationInst_\timeHorizon$ in the bound. In order to compute $\variationInst_\timeHorizon$, we need full knowledge not only of $\objectiveProcessDist$ and $\universalDist$, but also of $\objectiveProcess_1, \dots, \objectiveProcess_\timeHorizon$. However, we can derive the following high-probability bounds as a consequence of Markov's inequality and the preceding lemmas:
\begin{theorem}
\label{theorem:highprob-regret}
Consider the prediction problem $\predictionProblem$ defined in Problem~\ref{problem:prediction}. For every $\proberror \in (0,1]$, each of the following inequalities holds with probability at least $1-\proberror$ over $\objectiveProcessDist$ (where $\variation_\timeHorizon$ is given in \eqref{eq:def-variation-expected}):

\begin{minipage}[b]{.44\textwidth}
\begin{equation}
\label{eq:highprob-regret-variation}
\regret
<
4\maxLoss \variation_\timeHorizon
\cdot
\frac{1}{\proberror}
+
\frac{2\sqrt{2}\maxLoss}{\sqrt{\timeHorizon}}
\sqrt{\log\frac{2}{\proberror}}
\end{equation}
\end{minipage}
\begin{minipage}[b]{.54\textwidth}
\begin{equation}
\label{eq:highprob-regret-divergence}
\regret
<
2\maxLoss \sqrt{\frac{\kldiv{\objectiveProcessDist}{\universalDist}}{\timeHorizon}}
\cdot
\frac{1}{\sqrt{\proberror}}
+
\frac{2\sqrt{2}\maxLoss}{\sqrt{\timeHorizon}}
\sqrt{\log\frac{2}{\proberror}}.
\end{equation}
\end{minipage}
\end{theorem}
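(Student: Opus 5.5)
The plan is to combine Lemma~\ref{lemma:highprob-regret-tvdist-instantaneous}, applied with error probability $\proberror/2$, with a Markov-inequality bound on the random quantity $\variationInst_\timeHorizon$, also at level $\proberror/2$, and then take a union bound. Lemma~\ref{lemma:highprob-regret-tvdist-instantaneous} with $\proberror/2$ in place of $\proberror$ gives, with probability at least $1-\proberror/2$,
\[
\regret < 2\maxLoss\variationInst_\timeHorizon + \frac{2\sqrt{2}\maxLoss}{\sqrt{\timeHorizon}}\sqrt{\log\frac{2}{\proberror}}.
\]
The whole task is therefore to replace $\variationInst_\timeHorizon$ by a deterministic bound that holds with probability at least $1-\proberror/2$.

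For \eqref{eq:highprob-regret-variation}, I would use that $\variationInst_\timeHorizon \geq 0$ and $\Expectation_\objectiveProcessDist \variationInst_\timeHorizon = \variation_\timeHorizon$ by \eqref{eq:def-variation-expected}. Markov's inequality then yields
\[
\Probability\bigl(\variationInst_\timeHorizon \geq 2\variation_\timeHorizon/\proberror\bigr) \leq \proberror/2.
\]
The degenerate case $\variation_\timeHorizon = 0$ is harmless, since then $\variationInst_\timeHorizon = 0$ almost surely. On the intersection of the two good events, which has probability at least $1-\proberror$ by the union bound, we get $2\maxLoss\variationInst_\timeHorizon < 4\maxLoss\variation_\timeHorizon/\proberror$, and this is exactly \eqref{eq:highprob-regret-variation}. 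The non-strict versus strict inequalities need a little care, but the strict inequality already coming from the lemma suffices.

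For \eqref{eq:highprob-regret-divergence}, I would apply Markov to $\divergenceInst_\timeHorizon$ rather than to $\variationInst_\timeHorizon$. If $\objectiveProcessDist \notAbsolutelyContinuous \universalDist$, then $\kldiv{\objectiveProcessDist}{\universalDist} = \infty$ and the statement is trivial. Otherwise \eqref{eq:kldiv-tensorization} gives $\Expectation\divergenceInst_\timeHorizon = \kldiv{\objectiveProcessDist}{\universalDist}/\timeHorizon$, so with probability at least $1-\proberror/2$,
\[
\divergenceInst_\timeHorizon \leq \frac{2\kldiv{\objectiveProcessDist}{\universalDist}}{\timeHorizon\proberror}.
\]
Then \eqref{eq:tvdist-kldiv-inst} gives
\[
\variationInst_\timeHorizon \leq \sqrt{\tfrac12\divergenceInst_\timeHorizon} \leq \sqrt{\frac{\kldiv{\objectiveProcessDist}{\universalDist}}{\timeHorizon\proberror}}.
\]
Multiplying by $2\maxLoss$ gives the first term of \eqref{eq:highprob-regret-divergence}, and the union bound finishes the argument as before.

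The key design choice is to apply Markov's inequality to the divergence before taking the square root. Applying it to $\variationInst_\timeHorizon$ and then using \eqref{eq:tvdist-kldiv-expected} would only give a $1/\proberror$ dependence, whereas this route yields the better $1/\sqrt{\proberror}$. The main technical point to check is that $\divergenceInst_\timeHorizon$ is a measurable nonnegative random variable whose expectation equals the normalized KL divergence, but this is provided by Lemma~\ref{lemma:tvdist-kldiv}. Beyond that the argument is routine.
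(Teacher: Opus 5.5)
Your proposal is correct and follows the paper's proof essentially step for step. The paper applies Lemma~\ref{lemma:highprob-regret-tvdist-instantaneous} at level $\proberror/2$, applies Markov's inequality at level $\proberror/2$ to $\variationInst_\timeHorizon$ for \eqref{eq:highprob-regret-variation} and to $\divergenceInst_\timeHorizon$, followed by \eqref{eq:tvdist-kldiv-inst} and \eqref{eq:kldiv-tensorization}, for \eqref{eq:highprob-regret-divergence}, packaging the two Markov steps as an auxiliary lemma, and finishes with a union bound. Your separate treatment of the zero-mean case is a genuine small improvement: the paper's auxiliary lemma uses strict inequalities and divides by the expectation, so it is literally false when $\divergence_\timeHorizon = 0$ or $\variation_\timeHorizon = 0$, whereas the theorem still holds, as you note.
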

We include the full technical details of the proof in Appendix~\ref{appendix:highprob-regret}. The usefulness of Theorem~\ref{theorem:highprob-regret} is not immediately clear from its statement since computation of the bounds requires knowledge of $\objectiveProcessDist$ and $\universalDist$ and the learner is in general not expected to know $\objectiveProcessDist$. However, these quantities can conceivably be upper bounded in case the learner has knowledge that $\universalDist$ is a sufficiently high-quality approximation of $\objectiveProcessDist$. One case which is particularly relevant and for which it is known how to construct a suitable $\universalDist$ alongside a bound for $\kldiv{\objectiveProcessDist}{\universalDist}$ is the universal prediction scenario. We describe this scenario, survey the literature on how to choose $\universalDist$, bound $\kldiv{\objectiveProcessDist}{\universalDist}$, and obtain a high-probability regret bound from \eqref{eq:highprob-regret-divergence} in Section~\ref{section:universal-dist}.

Since these concentration results are obtained from Lemma~\ref{lemma:highprob-regret-tvdist-instantaneous} with the Markov bound (which is known not to be tight in many cases), this begs the question of whether it would be possible to improve upon Theorem~\ref{theorem:highprob-regret} by using more advanced techniques. It is not unheard of that learning algorithms perform well in expectation but in general pick up factors that are polynomial in the tail probability when viewed through the high-probability lens. This was for instance shown by \cite{aden2023one} for a class of binary classification problems. With the following impossibility result, we show that this is also the case here and this aspect of the bound cannot be improved without making additional assumptions.

\begin{theorem}
\label{theorem:impossibility}
For every $\constant \in (0,\infty)$, $\errorExpTV \in [0,1)$, $\errorExpKL \in [0,1/2)$, and every $\tail: \naturals \times (0,1] \rightarrow [0,\infty)$ such that for every fixed $\proberror \in (0,1]$, $\tail(\timeHorizon,\proberror) \rightarrow 0$ as $\timeHorizon\rightarrow\infty$, there is a prediction problem $\predictionProblem = (\timeHorizon, (\objectiveProcessAlphabet, \objectiveProcessSigmaAlgebra), \objectiveProcessDist, (\predictionDomain, \predictionDomainSigmaAlgebra), \lossFunction)$ with the following properties:
(i) $\objectiveProcessAlphabet, \predictionDomain := \{0,1,2\}$ (with discrete $\sigma$-algebras), and $\lossFunction(\prediction,\objectiveProcessValue) = \indicator{\prediction \neq \objectiveProcessValue}$ is the classification loss;
(ii) there exist a probability distribution $\universalDist$ on $(\objectiveProcessAlphabet^\timeHorizon, \nfoldProductAlgebra{\objectiveProcessSigmaAlgebra}{\timeHorizon})$ and $\proberror \in (0,1)$ as well as a learner's policy satisfying \eqref{eq:universal-prediction} such that with probability at least $\proberror$ under $\objectiveProcessDist$, the average per-round regret $\regret$ defined in \eqref{eq:average-regret} satisfies the following lower bounds with probability at least $\proberror$ under $\objectiveProcessDist$ (where $\variation_\timeHorizon$ is defined in \eqref{eq:def-variation-expected}):

\begin{minipage}[b]{.39\textwidth}
  \begin{equation}
  \label{eq:impossibility-tv}
  \regret
  \geq
  \constant
  \cdot
  \variation_\timeHorizon
  \cdot
  \frac{1}{\proberror^\errorExpTV}
  +
  \tail(\timeHorizon,\proberror)
  \end{equation}
\end{minipage}
\begin{minipage}[b]{.59\textwidth}
  \begin{equation}
  \label{eq:impossibility-kl}
  \regret
  \geq
  \constant
  \cdot
  \sqrt{\frac{\kldiv{\objectiveProcessDist}{\universalDist}}{\timeHorizon}}
  \cdot
  \frac{1}{\proberror^\errorExpKL}
  +
  \tail(\timeHorizon,\proberror).
  \end{equation}
\end{minipage}
\end{theorem}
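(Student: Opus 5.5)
The plan is a single explicit construction. With probability exactly $\proberror$ a rare event occurs, and on that event $\universalDist$ makes the learner predict the wrong symbol at every later round. Off the event, $\universalDist$ agrees with $\objectiveProcessDist$. The per-round regret on the rare event is then of order $1$. The expected quantities $\variation_\timeHorizon$ and $\kldiv{\objectiveProcessDist}{\universalDist}/\timeHorizon$ are only of order $\proberror$, because they average over the rare event. The factor $\proberror^{-\errorExpTV}$ or $\proberror^{-\errorExpKL}$ with $\errorExpTV<1$, $\errorExpKL<1/2$ cannot compensate for this, and the tail term $\tail$ is handled by taking $\timeHorizon$ large.

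Concretely, take $\objectiveProcessAlphabet=\predictionDomain=\{0,1,2\}$ with classification loss and $\maxLoss=1$, and fix $\proberror\in(0,1/2)$ to be chosen later. Under $\objectiveProcessDist$:
\begin{itemize}
\item $\objectiveProcess_1=1$ with probability $\proberror$ and $\objectiveProcess_1=0$ otherwise;
\item for $\timeInstant\geq 2$, deterministically $\objectiveProcess_\timeInstant=\objectiveProcess_1$.
\end{itemize}
Under $\universalDist$:
\begin{itemize}
\item $\objectiveProcess_1$ has the same law as under $\objectiveProcessDist$;
\item given $\objectiveProcess_1=0$, all later symbols are $0$;
\item given $\objectiveProcess_1=1$, the $\objectiveProcess_\timeInstant$, $\timeInstant\geq2$, are i.i.d., equal to $2$ with probability $3/4$ and to $1$ with probability $1/4$.
\end{itemize}
At $\timeInstant=1$ both argmins contain $0$, and we pick the same prediction for $\prediction_1$ and $\optimalPrediction_1$. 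For $\timeInstant\ge 2$ on $\{\objectiveProcess_1=1\}$ the minimizers are unique: $\optimalPrediction_\timeInstant=1$ and $\prediction_\timeInstant=2$. On $\{\objectiveProcess_1=0\}$ both predict $0$. Hence on the event $\{\objectiveProcess_1=1\}$, which has probability exactly $\proberror$, we get $\regret=(\timeHorizon-1)/\timeHorizon\geq 3/4$ for $\timeHorizon\geq4$.

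Next I compute the similarity quantities from their definitions. We have $\variationInstSummand_1=0$, and $\variationInstSummand_\timeInstant=0$ on $\{\objectiveProcess_1=0\}$. On $\{\objectiveProcess_1=1\}$, for $\timeInstant\ge2$, the supremum defining $\variationInstSummand_\timeInstant$ is attained at the event $\{1\}$ (equivalently $\{2\}$), so $\variationInstSummand_\timeInstant=3/4$. Thus $\variation_\timeHorizon\leq \tfrac34\proberror$. Also $\objectiveProcessDist\absolutelyContinuous\universalDist$, and on $\{\objectiveProcess_1=1\}$ each later kernel contributes divergence $\log 4$. By \eqref{eq:kldiv-tensorization}, $\kldiv{\objectiveProcessDist}{\universalDist}\leq \proberror\timeHorizon\log 4$, so $\sqrt{\kldiv{\objectiveProcessDist}{\universalDist}/\timeHorizon}\leq\sqrt{\proberror\log 4}$. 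The right-hand sides of \eqref{eq:impossibility-tv} and \eqref{eq:impossibility-kl} are therefore at most $\tfrac34\constant\proberror^{1-\errorExpTV}+\tail(\timeHorizon,\proberror)$ and $\constant\sqrt{\log4}\,\proberror^{1/2-\errorExpKL}+\tail(\timeHorizon,\proberror)$, respectively.

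Finally I choose the parameters in the right order. First fix $\proberror\in(0,1/2)$ small enough that both $\tfrac34\constant\proberror^{1-\errorExpTV}\le 1/4$ and $\constant\sqrt{\log 4}\,\proberror^{1/2-\errorExpKL}\le1/4$; this is possible because $1-\errorExpTV>0$ and $1/2-\errorExpKL>0$. Then, with this $\proberror$ fixed, use $\tail(\timeHorizon,\proberror)\to0$ to pick $\timeHorizon\geq4$ with $\tail(\timeHorizon,\proberror)\leq 1/4$. Both right-hand sides are then at most $1/2<3/4\le\regret$ on an event of probability $\proberror$.

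The main obstacle is this ordering: $\proberror$ must be fixed before $\timeHorizon$, since $\tail$ is only assumed to vanish pointwise in $\proberror$. One also has to check that the instantaneous kernels are defined everywhere consistently, including off the support, so that the uniqueness of the argmins is unambiguous. The rest is routine.
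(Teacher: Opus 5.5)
Your proposal is correct and is essentially the paper's proof: a first symbol that occurs with probability $\proberror$, after which $\objectiveProcessDist$ continues deterministically with a symbol that $\universalDist$ considers unlikely. As a result $\variation_\timeHorizon$ and $\kldiv{\objectiveProcessDist}{\universalDist}/\timeHorizon$ are only $\landauO(\proberror)$, while $\regret=(\timeHorizon-1)/\timeHorizon$ on an event of probability $\proberror$. The differences are cosmetic: the paper takes $\universalDist$ i.i.d.\ with mass $1/8$ on the forced symbol and encodes the ordering ``first $\proberror$, then $\timeHorizon$'' through sequences $\proberror_\generalNatural$, $\timeHorizon_\generalNatural$ and a limit argument, whereas you fix $\proberror$ and then $\timeHorizon$ with explicit thresholds.
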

To prove Theorem~\ref{theorem:impossibility}, we construct a prediction problem in which the regret is $0$ in most cases, but there is a non-negligible probability that the average per-round regret is close to $1$ (which is the maximum value for the classification loss function given in the theorem statement). We then argue that because we have this non-negligible probability of high regret, it is not possible to give a high-probability bound that converges to a value significantly below $1$. The full details of this construction are relegated to the proof in Appendix~\ref{appendix:impossibility}.

We can see from Theorem~\ref{theorem:impossibility} that Theorem~\ref{theorem:highprob-regret} cannot be improved in the sense that the factor $1/\proberror$, respectively $1/\sqrt{\proberror}$, that appears in the bounds cannot be replaced with an expression that grows significantly more slowly (order-wise) as $\proberror \rightarrow 0$ without making additional assumptions. This remains true even if we are willing to sacrifice convergence speed of the second summand that appears in the bound.

\section{Applying mismatched prediction results to universal prediction}
\label{section:universal-dist}
In this section, we discuss the relevance of the results proposed in Section~\ref{section:results-mismatched-prediction} and how they could be applied to online learning problems. The most obvious way our results can be used is if a good approximation $\universalDist$ of $\objectiveProcessDist$ is known, in which case, as long as we can bound $\kldiv{\objectiveProcessDist}{\universalDist}$ or $\variation_\timeHorizon$, Theorem~\ref{theorem:highprob-regret} is immediately applicable. In this section, we consider the case that a suitable $\universalDist$ is not available to the learner a priori, focusing on the rate at which $\regret$ converges to $0$ as $\timeHorizon \rightarrow \infty$. To formulate this precisely, we need the following asymptotic notion of the problem of prediction of stochastic sequences:

\begin{problem}[Asymptotic prediction of stochastic sequences]
An asymptotic prediction problem $\asymptoticPredictionProblem = ((\objectiveProcessAlphabet, \objectiveProcessSigmaAlgebra),\objectiveProcessDist_\naturals, (\predictionDomain, \predictionDomainSigmaAlgebra), \lossFunction)$ is characterized by a measurable space $(\objectiveProcessAlphabet, \objectiveProcessSigmaAlgebra)$, a probability distribution $\objectiveProcessDist_\naturals$ on $(\objectiveProcessAlphabet^\naturals, \nfoldProductAlgebra{\objectiveProcessSigmaAlgebra}{\naturals})$, a measurable space $(\predictionDomain, \predictionDomainSigmaAlgebra)$ (the prediction domain), and a measurable loss function $\lossFunction: \predictionDomain \times \objectiveProcessAlphabet \rightarrow [0,\maxLoss]$, where $\maxLoss \in [0,\infty)$. The learner's policy $\prediction = (\prediction_\timeInstant)_{\timeInstant \in \naturals}$ consists of measurable (deterministic) functions $\prediction_\timeInstant: \objectiveProcessAlphabet^{\timeInstant-1} \rightarrow \predictionDomain$.
\end{problem}
For each $\timeHorizon \in \naturals$, an asymptotic prediction problem $\asymptoticPredictionProblem = ((\objectiveProcessAlphabet, \objectiveProcessSigmaAlgebra),\objectiveProcessDist_\naturals, (\predictionDomain, \predictionDomainSigmaAlgebra), \lossFunction)$ induces a prediction problem (as defined in Problem~\ref{problem:prediction}) $\predictionProblem_\timeHorizon := (\timeHorizon, (\objectiveProcessAlphabet, \objectiveProcessSigmaAlgebra), \objectiveProcessDist_{\naturalsInitialSegment{\timeHorizon}}, (\predictionDomain, \predictionDomainSigmaAlgebra), \lossFunction)$ and a learner's policy $(\prediction_\timeInstant)_{\timeInstant \in \naturals}$ for $\asymptoticPredictionProblem$ induces a policy $(\prediction_\timeInstant)_{\timeInstant \in \naturalsInitialSegment{\timeHorizon}}$ for $\predictionProblem_\timeHorizon$.

We consider a learner's policy that satisfies, similarly to \eqref{eq:universal-prediction},
\begin{equation}
\label{eq:asymptotic-universal-prediction}
\prediction_\timeInstant\left(\objectiveProcessValue_1, \dots, \objectiveProcessValue_{\timeInstant-1}\right)
\in
\argmin_{\prediction \in \predictionDomain}
  \Expectation_{\universalDist_\naturals} \big(
    \lossFunction(\prediction, \objectiveProcess_\timeInstant)
    |
    \objectiveProcess_1 = \objectiveProcessValue_1,
    \dots,
    \objectiveProcess_{\timeInstant-1} = \objectiveProcessValue_{\timeInstant-1}
  \big).
\end{equation}

If $\timeHorizon$ is clear from context, we will sometimes write $\objectiveProcessDist$ instead of $\objectiveProcessDist_{\naturalsInitialSegment{\timeHorizon}}$ and $\universalDist$ instead of $\universalDist_{\naturalsInitialSegment{\timeHorizon}}$ to match notation in the rest of the paper. Because no random variables other than $\objectiveProcess_1, \dots, \objectiveProcess_\timeInstant$ appear in \eqref{eq:asymptotic-universal-prediction} and $\timeInstant \leq \timeHorizon$, any policy function $\prediction_\timeInstant$ satisfies \eqref{eq:asymptotic-universal-prediction} iff it satisfies \eqref{eq:universal-prediction}, making the results in Section~\ref{section:results-mismatched-prediction} applicable.

We are interested in whether the average per-round regret $\regret$ defined in \eqref{eq:average-regret} converges to $0$ as $\timeHorizon \rightarrow \infty$ and if it does, what the convergence order is. It is clear from Corollary~\ref{cor:expected-regret-kldiv} and Theorem~\ref{theorem:highprob-regret} that whenever $\kldiv{\objectiveProcessDist}{\universalDist}$ grows sublinearly in $\timeHorizon$, the average per-round regret $\regret$ converges to $0$ both in expectation and with high probability.

The scenario usually considered in the literature is the one proposed by \cite{merhav1998universal} where there is some parametrized family $(\objectiveProcessDist_\distributionIndex)_{\distributionIndex \in \distributionIndexDomain}$ (where $\distributionIndexDomain$ is a measurable space) that is known to the learner and it is further known that $\objectiveProcessDist_\naturals = \objectiveProcessDist_{\distributionIndex_0}$ for some $\distributionIndex_0 \in \distributionIndexDomain$ (where $\distributionIndex_0$ does not need to be known by the learner). Under this assumption, it was proposed by \cite{merhav1998universal} that the learner chooses a learner's policy which satisfies \eqref{eq:asymptotic-universal-prediction} with respect to the \emph{universal measure}
\begin{equation}
\label{eq:universal-average}
\universalDist_\naturals
:=
\int_\distributionIndexDomain \objectiveProcessDist_\distributionIndex \distributionIndexWeight(d\distributionIndex),
\end{equation}
where $\distributionIndexWeight$ is a probability distribution on $\distributionIndexDomain$.

There are two main cases we are aware of in which \eqref{eq:universal-average} is well-defined and yields a $\universalDist_\naturals$ which describes a policy that has a regret which vanishes at a known rate: (i) $\distributionIndexDomain$ is countable, or (ii) $(\objectiveProcessDist_\distributionIndex)_{\distributionIndex \in \distributionIndexDomain}$ satisfies suitable smoothness conditions and all $\objectiveProcessDist_\distributionIndex$ are i.i.d. or Markov chains with finite memory.

In case (i), it is observed by \cite{hutter2003optimality} that $\kldiv{\objectiveProcessDist}{\universalDist} \leq - \log \distributionIndexWeight(\{\distributionIndex_0\})$, independently of $\timeHorizon$. Therefore, as long as $\distributionIndexWeight(\{\distributionIndex\}) > 0$ for all $\distributionIndex \in \distributionIndexDomain$ (which is always possible for countable $\distributionIndexDomain$), our Theorem~\ref{theorem:highprob-regret} yields an $\landauO(\timeHorizon^{-1/2} \proberror^{-1/2})$ convergence rate with probability at least $1-\proberror$ compared to the $\landauO(\timeHorizon^{-1/2})$ convergence rate of the expected regret guaranteed by Corollary~\ref{cor:expected-regret-kldiv}. It was further noted by \cite{hutter2003optimality} that the learner's knowledge of a suitable parametrized family $(\objectiveProcessDist_\distributionIndex)_{\distributionIndex \in \distributionIndexDomain}$ is actually not a very restrictive assumption at all: it is possible to enumerate all probability distributions on $\objectiveProcessAlphabet^\naturals$ that are computable to arbitrary precision on a universal Turing machine. Although this has no impact on the convergence rate, the constant that appears in the regret bound could be extremely large if this is done. Therefore, \cite{hutter2003optimality} suggested to assign higher probabilities in $\distributionIndexWeight$ to distributions with lower Kolmogorov complexity. In this way, as long as the distribution $\objectiveProcessDist_\naturals$ is of sufficiently low Kolmogorov complexity, the constant in the regret bound would not be overly large.

Next, we discuss case (ii). \cite{merhav1998universal} point out that \cite{clarke1990information} show for the case in which $\objectiveProcessDist_\naturals$ is an i.i.d. distribution that the distribution $\universalDist_\naturals$ defined in \eqref{eq:universal-average} has the property that $\kldiv{\objectiveProcessDist}{\universalDist}$ grows as $\landauO(\log(\timeHorizon))$. This holds if $\distributionIndexWeight$ is supported on all of $\distributionIndexDomain$ and some additional smoothness assumptions are satisfied. Consequently, $\regret$ converges as $\landauO((\timeHorizon/\log\timeHorizon)^{-1/2})$ in expectation by Corollary~\ref{cor:expected-regret-kldiv} and our Theorem~\ref{theorem:highprob-regret} yields a convergence order of $\landauO((\timeHorizon/\log\timeHorizon)^{-1/2} \proberror^{-1/2})$ with probability at least $1-\proberror$. It was later shown by \cite{atteson1999asymptotic} that the same holds not just for the i.i.d. case, but also for the more general case of Markov chains with memory of a fixed order.

The convergence rates of $\regret$ are summarized in Table~\ref{table:convergence-summary}. In multiple relevant cases, Corollary~\ref{cor:expected-regret-kldiv} and Theorem~\ref{theorem:highprob-regret} yield vanishing average per-round regret and an upper bound on the convergence order. This is assuming that $\objectiveProcessDist_\naturals$ comes from a known parametrized family $(\objectiveProcessDist_\distributionIndex)_{\distributionIndex \in \distributionIndexDomain}$, but as noted by \cite{hutter2003optimality}, often this is not an overly limiting assumption.

\begin{table}
\centering
\begin{tabular}{lll}
\toprule
& \makecell[cl]{In expectation \\ \citep{merhav1998universal}} & \makecell[cl]{With probability $\geq 1-\proberror$ \\ (this work)} \\
\hline
$\objectiveProcessAlphabet$ countable \citep{hutter2003optimality} & $\landauO(\timeHorizon^{-1/2})$ & $\landauO(\timeHorizon)^{-1/2} \proberror^{-1/2})$ \\
$\objectiveProcessDist_\naturals$ i.i.d. \citep{clarke1990information} & $\landauO((\timeHorizon/\log\timeHorizon)^{-1/2})$ & $\landauO((\timeHorizon/\log\timeHorizon)^{-1/2} \proberror^{-1/2})$ \\
\makecell[cl]{$\objectiveProcessDist_\naturals$ Markov with memory \\ \citep{atteson1999asymptotic}} & $\landauO((\timeHorizon/\log\timeHorizon)^{-1/2})$ & $\landauO(\timeHorizon/\log\timeHorizon)^{-1/2} \proberror^{-1/2})$ \\
\bottomrule
\end{tabular}
\vspace{5pt}
\caption{Summary of the convergence behavior of the average per-round regret $\regret$ for different assumptions about $\objectiveProcessDist_\naturals$, respectively $\objectiveProcessAlphabet$ in the case where $\objectiveProcessDist_\naturals$ comes from a parametrized family.}
\label{table:convergence-summary}
\end{table}

\section{Numerical Experiments}
\label{section:numerics}

In this section, we apply the algorithm~\eqref{eq:asymptotic-universal-prediction} with a choice of $\universalDist_\naturals$ of the form given in~\eqref{eq:universal-average} to a Markov chain with memory and a finite state space. Specifically, $\objectiveProcessAlphabet = \{1, \dots, \numStates\}$ where $\numStates$ denotes the number of states the Markov chain can take. The probability distribution $\objectiveProcessDist_\naturals$ of the underlying process $\objectiveProcess_1, \dots$ is then the unique distribution that satisfies \eqref{eq:regular-conditional-probability} for all $\timeInstant \in \naturals$ with
\[
  \processKernel{\objectiveProcessDist}{\timeInstant}(
    \objectiveProcessValue_1, \dots, \objectiveProcessValue_{\timeInstant-1},
    \{\objectiveProcessValue_\timeInstant\}
  )
  :=
  \transitionProbs{\objectiveProcessValue_\timeInstant}
                  {\objectiveProcessValue_{\timeInstant-\memory}, \dots, \objectiveProcessValue_{\timeInstant-1}}
\]
for all $\objectiveProcessValue_1, \dots, \objectiveProcessValue_\timeInstant \in \objectiveProcessAlphabet$, where for simplicity we use the convention that $\objectiveProcessValue_\timeInstant = 1$ whenever $\timeInstant \leq 0$ (i.e., the initial few realizations of the process are as though the missing realizations were all in state $1$) and $(\transitionProbs{\state}{\state_1, \dots, \state_\memory})_{\state, \state_1, \dots, \state_\memory \in \objectiveProcessAlphabet}$ are parameters with the property
\begin{equation}
\label{eq:mc-transition-probabilities}
\forall \state, \state_1, \dots, \state_\memory \in \objectiveProcessAlphabet:~
\transitionProbs{\state}{\state_1, \dots, \state_\memory} \geq 0,~~
\forall \state_1, \dots, \state_\memory \in \objectiveProcessAlphabet:~
\sum_{\state=1}^\numStates \transitionProbs{\state}{\state_1, \dots, \state_\memory} = 1.
\end{equation}
The learner's prediction task is to predict the next realization of the Markov chain, i.e., $\predictionDomain = \objectiveProcessAlphabet$, and its performance is evaluated with the classification loss, $\lossFunction(\prediction, \objectiveProcessValue) := \indicator{\prediction \neq \objectiveProcessValue}$.

The parameters in \eqref{eq:mc-transition-probabilities} are unknown to the learner, which will instead rely on $\universalDist_\naturals$ defined in \eqref{eq:universal-average}, where $\distributionIndexDomain$ is the set of all possible tuples of $(\transitionProbs{\state}{\state_1, \dots, \state_\memory})_{\state, \state_1, \dots, \state_\memory \in \objectiveProcessAlphabet}$ which satisfy \eqref{eq:mc-transition-probabilities}. We relegate the details of how $\distributionIndexDomain$ is constructed and how the expectation over the resulting probability distribution $\universalDist$ is minimized to Appendix~\ref{appendix:numerics}.

In Figure~\ref{fig:regret-plot}, we show the mean regret of the learner's policy as well as the observed regret quantiles in $4,000$ simulation runs of this policy. It can be seen that the average regret gets close to $0$ for long enough sequence lengths, and so do the quantiles (which can be understood as empirical observations of the high-probability regret), albeit quite a bit more slowly than the average, as would be expected from our theoretical results.

In this example, we have chosen to enumerate \emph{all} possible probability distributions for $\memory=3$ and $\numStates=2$ in our family $\distributionIndexDomain$. As can be seen in the definition \eqref{eq:example-distribution-index-domain} of $\distributionIndexDomain$ in Appendix~\ref{appendix:numerics}, the dimension of $\distributionIndexDomain$ is $(\numStates-1) \cdot \numStates^\memory$ which remains manageable as long as the memory order is small and the state space moderately sized. However, the framework also offers the flexibility to incorporate additional side information about the distribution of the observed sequence by choosing to include only certain probability distributions in $\distributionIndexDomain$. In this way, the convergence time can be improved, computing the predictions can be made more efficient, and using larger (or continuous) state spaces can become tractable. If the hyperparameters of the Metropolis-Hastings algorithm are appropriately tuned, we expect roughly a linear dependence of the computational complexity on the dimension of $\distributionIndexDomain$. As a rough orientation for the complexity of this example, it takes between a few seconds and a few minutes to generate a single prediction on a standard laptop without parallelization (depending on the length of the already observed sequence). It it also worth noting that $\distributionIndexWeight$ does \emph{not} need to accurately represent prior belief to guarantee a vanishing regret -- indeed, the convergence rates in Table~\ref{table:convergence-summary} are guaranteed to hold as long as the true $\distributionIndex$ is in the support of $\distributionIndexWeight$. With that being said, it is possible to incorporate prior belief into the design of $\distributionIndexWeight$ by assigning more weight to more likely $\distributionIndex$. While that will not affect the convergence rate, it does affect the constants that are hidden in the $\landauO$ expressions and can therefore be attractive in practice.

\begin{figure}
\centering
\begin{adjustbox}{width=10cm, trim=0cm .3cm 0cm 1.2cm, clip=True}
\input{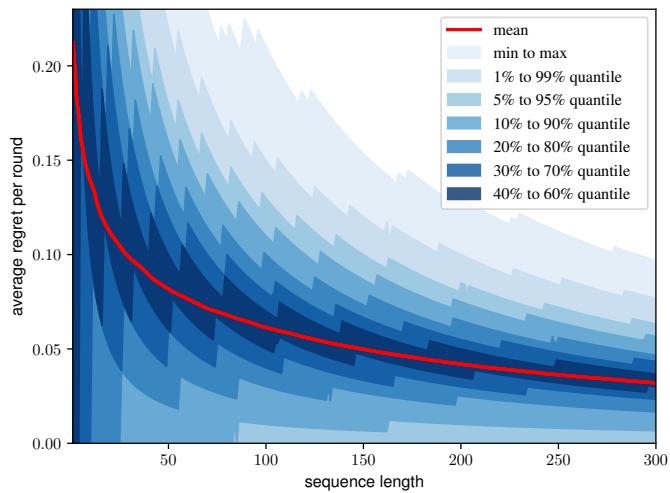}
\end{adjustbox}
\caption{Average regret per round for a Markov chain with memory order $\memory=3$ and $\numStates=2$ states. Shown are the mean and selected quantiles for $4,000$ runs.}
\label{fig:regret-plot}
\end{figure}

\section{Limitations and future research directions}
\label{section:conclusion}
While the mismatched prediction results of Theorem~\ref{theorem:highprob-regret} need very few assumptions and can therefore be expected to be immediately applicable to a wide range of practical problems, their usefulness for universal prediction scenarios would crucially depend on the construction of a suitable parametric family $(\objectiveProcessDist_\distributionIndex)_{\distributionIndex \in \distributionIndexDomain}$. The literature we summarize in Section~\ref{section:universal-dist} has several examples of how to do this that are quite satisfying from a mathematical perspective, and even the idea of enumerating all possible probability distributions can sometimes be feasible as we show in Section~\ref{section:numerics}. However, this might not be feasible in every practical scenarios, so additional research might be needed for specific scenarios to find suitable approximations of the minimization \eqref{eq:universal-prediction} when the parametric family is very large.

It is also interesting that the error probability $\proberror$ in the upper bound of Lemma~\ref{lemma:highprob-regret-tvdist-instantaneous} appears only in the form $\sqrt{\log(1/\proberror)}$, but the scaling behavior of the error terms in Theorem~\ref{theorem:highprob-regret} is much worse. We show in Theorem~\ref{theorem:impossibility} that it is not possible to derive a better result in general, but it would be worthwhile to investigate in future research if there are suitable additional assumptions that would yield a bound akin to the one in Theorem~\ref{theorem:highprob-regret} but with a logarithmic scaling in $\proberror$ comparable to the bound of Lemma~\ref{lemma:highprob-regret-tvdist-instantaneous}.

\subsubsection*{Acknowledgments}
This work was supported in part by the Australian Research Council under Project DP230101493.

\bibliographystyle{iclr2025_conference}
\bibliography{online-pac-references}


\appendix

\section{Detailed Arguments for Remark~\ref{remark:predictor-class}}
\label{appendix:predictor-class}

\paragraph{Item~\ref{item:predictor-class-finite}:}
Consider a learner's policy which for all $\timeInstant \in \naturalsInitialSegment{\timeHorizon}$, having seen realizations $\objectiveProcessValue_1, \dots, \objectiveProcessValue_{\timeInstant-1}$ makes the prediction $\prediction_\timeInstant(\objectiveProcessValue_1, \dots, \objectiveProcessValue_{\timeInstant-1})$. We design $\universalDist$ such that conditioned under $\objectiveProcessValue_1, \dots, \objectiveProcessValue_{\timeInstant-1}$, it assigns a large probability to $\prediction_\timeInstant(\objectiveProcessValue_1, \dots, \objectiveProcessValue_{\timeInstant-1})$ and smaller but nonzero probabilities (so that no singularities in the definition of conditional probability occur) to the other possible outcomes. More precisely, pick $\policyConstructionHigh, \policyConstructionLow \in (0,1)$ such that $\policyConstructionHigh > \policyConstructionLow / (\cardinality{\objectiveProcessAlphabet}-1)$ and $\policyConstructionHigh+\policyConstructionLow=1$. Define, for each $\timeInstant$,
\[
\processKernel{\universalDist}{\timeInstant}(\objectiveProcessValue_1, \dots, \objectiveProcessValue_{\timeInstant-1}, \{\objectiveProcessValue\})
:=
\begin{cases}
  \policyConstructionHigh, &\objectiveProcessValue=\prediction_\timeInstant(\objectiveProcessValue_1, \dots, \objectiveProcessValue_{\timeInstant-1}) \\
  \policyConstructionLow / (\cardinality{\objectiveProcessAlphabet}-1), &\text{otherwise,}
\end{cases}
\]
and construct $\universalDist$ such that it satisfies the analog of \eqref{eq:regular-conditional-probability}. Then the policy $(\prediction_\timeInstant)_{\timeInstant \in \naturalsInitialSegment{\timeHorizon}}$, which was arbitrary above, satisfies \eqref{eq:universal-prediction}.

\paragraph{Item~\ref{item:predictor-class-log}:}
Given a learner's policy $(\prediction_\timeInstant)_{\timeInstant \in \naturalsInitialSegment{\timeHorizon}}$, we can define $\processKernel{\universalDist}{\timeInstant}(\objectiveProcessValue_1, \dots, \objectiveProcessValue_{\timeInstant-1}, \{\objectiveProcessValue\}) := \prediction_\timeInstant(\objectiveProcessValue_1, \dots, \objectiveProcessValue_{\timeInstant-1})(\objectiveProcessValue)$. We then have
\begin{align*}
\Expectation_{\universalDist} \big(
  \lossFunction(\prediction, \objectiveProcess_\timeInstant)
  |
  \objectiveProcess_1 = \objectiveProcessValue_1,
  \dots,
  \objectiveProcess_{\timeInstant-1} = \objectiveProcessValue_{\timeInstant-1}
\big)
&=
-
\Expectation_{\prediction_\timeInstant(\objectiveProcessValue_1, \dots, \objectiveProcessValue_{\timeInstant-1})}
\log(\prediction(\objectiveProcess_\timeInstant))
\\
&=
\shannonEntropy(\prediction_\timeInstant(\objectiveProcessValue_1, \dots, \objectiveProcessValue_{\timeInstant-1}))
+
\kldiv{\prediction_\timeInstant(\objectiveProcessValue_1, \dots, \objectiveProcessValue_{\timeInstant-1})}{\prediction},
\end{align*}
where $\kldiv{\cdot}{\cdot}$ denotes Kullback-Leibler divergence and $\shannonEntropy$ denotes Shannon entropy. Since Kullback-Leibler divergence takes its minimum value $0$ iff the two arguments are identical, we can conclude that this expression is minimized for $\prediction = \prediction_\timeInstant(\objectiveProcessValue_1, \dots, \objectiveProcessValue_{\timeInstant-1})$, showing that the policy is indeed represented in the form \eqref{eq:universal-prediction}.

\section{Existence of measurable $(\optimalPrediction_\timeInstant)_{\timeInstant\in\naturalsInitialSegment{\timeHorizon}}$ and $(\prediction_\timeInstant)_{\timeInstant\in\naturalsInitialSegment{\timeHorizon}}$}
\label{appendix:measurability}

In this appendix, we give sufficient conditions under which there exist measurable maps $(\optimalPrediction_\timeInstant)_{\timeInstant\in\naturalsInitialSegment{\timeHorizon}}$ and $(\prediction_\timeInstant)_{\timeInstant\in\naturalsInitialSegment{\timeHorizon}}$ such that \eqref{eq:optimal-prediction} and \eqref{eq:universal-prediction} are satisfied.

The measurable maximum theorem~\citep[Theorem 18.19]{aliprantis2006infinite} ensures that such maps exist if $(\predictionDomain, \predictionDomainSigmaAlgebra)$ is metrizable in such a way that the following are satisfied:
\begin{itemize}
  \item $(\predictionDomain, \predictionDomainSigmaAlgebra)$ is separable and compact.
  \item For every $\prediction \in \predictionDomain$, the function $\lossFunction(\prediction, \cdot): \objectiveProcessAlphabet \rightarrow [0,\maxLoss]$ is measurable.
  \item The function $\lossFunction(\cdot, \objectiveProcessValue): \predictionDomain \rightarrow [0,\maxLoss]$ is continuous.
\end{itemize}

An important special case of this is that of finite $\predictionDomain$ with the discrete $\sigma$-algebra as the choice for $\predictionDomainSigmaAlgebra$.

\section{Proof of Lemma~\ref{lemma:tvdist-kldiv}}
\label{appendix:lemma-tvdist-kldiv}
First, we observe that \eqref{eq:tvdist-kldiv-summands} is simply an application of Pinsker's inequality.

For \eqref{eq:tvdist-kldiv-inst}, we argue
\[
\variationInst_\timeHorizon
\overset{\eqref{eq:def-variation-inst}}{=}
\frac{1}{\timeHorizon}
\sum_{\timeInstant=1}^\timeHorizon
  \variationInstSummand_\timeInstant
\overset{\eqref{eq:tvdist-kldiv-summands}}{\leq}
\frac{1}{\timeHorizon}
\sum_{\timeInstant=1}^\timeHorizon
  \sqrt{\frac{1}{2}\divergenceInstSummand_\timeInstant}
\overset{(a)}{\leq}
\sqrt{
  \frac{1}{2}
  \cdot
  \frac{1}{\timeHorizon}
  \sum_{\timeInstant=1}^\timeHorizon
    \divergenceInstSummand_\timeInstant
}
\overset{\eqref{eq:def-divergence-inst}}{=}
\sqrt{
  \frac{1}{2}
  \divergenceInst_\timeHorizon
},
\]
where (a) is due to Jensen's inequality and concavity of the square root.

For \eqref{eq:tvdist-kldiv-expected}, we make the very similar argument
\[
\variation_\timeHorizon
\overset{\eqref{eq:def-variation-expected}}{=}
\Expectation_\objectiveProcessDist \variationInst_\timeHorizon
\overset{\eqref{eq:tvdist-kldiv-inst}}{\leq}
\Expectation_\objectiveProcessDist
  \sqrt{
    \frac{1}{2}
    \divergenceInst_\timeHorizon
  }
\overset{(a)}{\leq}
\sqrt{
\frac{1}{2}
\Expectation_\objectiveProcessDist
  \divergenceInst_\timeHorizon
}
\overset{\eqref{eq:def-divergence-expected}}{=}
\sqrt{
\frac{1}{2}
  \divergence_\timeHorizon
},
\]
where (a) is again due to Jensen's inequality and concavity of the square root.

For \eqref{eq:kldiv-tensorization}, we first note that since $\objectiveProcessDist \absolutelyContinuous \universalDist$, it follows by induction from~\cite[Section 6.3, Proposition 1]{rao2004measure} that $\objectiveProcessDist$-almost surely for all $\timeInstant \in \naturalsInitialSegment{\timeHorizon}$, $\processKernel{\objectiveProcessDist}{\timeInstant}(\objectiveProcess_1, \dots, \objectiveProcess_{\timeInstant-1}, \cdot) \absolutelyContinuous \processKernel{\universalDist}{\timeInstant}(\objectiveProcess_1, \dots,  \objectiveProcess_{\timeInstant-1}, \cdot)$, and we can decompose, for $\objectiveProcessDist$-almost all $(\objectiveProcessValue_1, \dots, \objectiveProcessValue_\timeHorizon) \in \objectiveProcessAlphabet^\timeHorizon$,
\begin{equation}
\label{eq:lemma-tvdist-kldiv-rn-decomposition}
\rnderiv{\objectiveProcessDist}{\universalDist}
  (\objectiveProcessValue_1, \dots, \objectiveProcessValue_\timeHorizon)
=
\prod_{\timeInstant=1}^\timeHorizon
  \rnderiv{
    \processKernel{\objectiveProcessDist}{\timeInstant}
      (\objectiveProcessValue_1, \dots, \objectiveProcessValue_{\timeInstant-1}, \cdot)
  }{
    \processKernel{\universalDist}{\timeInstant}
      (\objectiveProcessValue_1, \dots, \objectiveProcessValue_{\timeInstant-1}, \cdot)
  }(\objectiveProcessValue_\timeInstant).
\end{equation}

So we have
\begin{align*}
\frac{1}{\timeHorizon} \kldiv{\objectiveProcessDist}{\universalDist}
\overset{(a)}&{=}
\frac{1}{\timeHorizon}
\Expectation_\objectiveProcessDist
  \log\rnderiv{\objectiveProcessDist}{\universalDist}
    (\objectiveProcess_1, \dots, \objectiveProcess_\timeHorizon)
\\
\overset{\eqref{eq:lemma-tvdist-kldiv-rn-decomposition}}&{=}
\frac{1}{\timeHorizon}
\Expectation_\objectiveProcessDist
  \log\prod_{\timeInstant=1}^\timeHorizon
    \rnderiv{\processKernel{\objectiveProcessDist}{\timeInstant}(\objectiveProcess_1, \dots, \objectiveProcess_{\timeInstant-1}, \cdot)}
            {\processKernel{\universalDist}{\timeInstant}(\objectiveProcess_1, \dots, \objectiveProcess_{\timeInstant-1}, \cdot)}
      (\objectiveProcess_\timeInstant)
\\
\overset{(b)}&{=}
\sum_{\timeInstant=1}^\timeHorizon
\frac{1}{\timeHorizon}
\Expectation_{\objectiveProcessDist_{\naturalsInitialSegment{\timeInstant-1}}}
\Expectation_{\processKernel{\objectiveProcessDist}{\timeInstant}(\objectiveProcess_1, \dots, \objectiveProcess_{\timeInstant-1}, \cdot)}
  \log
    \rnderiv{\processKernel{\objectiveProcessDist}{\timeInstant}(\objectiveProcess_1, \dots, \objectiveProcess_{\timeInstant-1}, \cdot)}
            {\processKernel{\universalDist}{\timeInstant}(\objectiveProcess_1, \dots, \objectiveProcess_{\timeInstant-1}, \cdot)}
      (\objectiveProcess_\timeInstant)
\\
\overset{(a)}&{=}
\sum_{\timeInstant=1}^\timeHorizon
\frac{1}{\timeHorizon}
\Expectation_{\objectiveProcessDist_{\naturalsInitialSegment{\timeInstant-1}}}
  \divergenceInstSummand_\timeInstant
\\
\overset{(c)}&{=}
\Expectation
\frac{1}{\timeHorizon}
\sum_{\timeInstant=1}^\timeHorizon
  \divergenceInstSummand_\timeInstant
\\
\overset{\eqref{eq:def-divergence-inst},\eqref{eq:def-divergence-expected}}&{=}
\divergence_\timeHorizon,
\end{align*}
where in both steps labeled (a) we have used the definition of Kullback-Leibler divergence, in (b) we have used \eqref{eq:regular-conditional-probability} in conjunction with the linearity of expectation, and in (d) we have used the linearity of expectation.

\section{Proof of Lemma~\ref{lemma:highprob-regret-tvdist-instantaneous}}
\label{appendix:highprob-regret-tvdist-instantaneous}
Property \eqref{eq:universal-prediction} of $\prediction_\timeInstant(\objectiveProcessValue_1, \dots, \objectiveProcessValue_{\timeInstant-1})$ implies in particular that for every $\timeInstant \in \naturalsInitialSegment{\timeHorizon}$ and $\objectiveProcessValue_1, \dots, \objectiveProcessValue_{\timeInstant-1}$,
\begin{multline}
\label{eq:prediction-choice-implication}
\Expectation_{\universalDist}\big(
  \lossFunction(\prediction_\timeInstant(\objectiveProcessValue_1, \dots, \objectiveProcessValue_{\timeInstant-1}), \objectiveProcess_\timeInstant)
  |
  \objectiveProcess_1 = \objectiveProcessValue_1,
  \dots,
  \objectiveProcess_{\timeInstant-1} = \objectiveProcessValue_{\timeInstant-1}
\big)
\\
\leq
\Expectation_{\universalDist}\big(
  \lossFunction(\optimalPrediction_\timeInstant(\objectiveProcessValue_1, \dots, \objectiveProcessValue_{\timeInstant-1}), \objectiveProcess_\timeInstant)
  |
  \objectiveProcess_1 = \objectiveProcessValue_1,
  \dots,
  \objectiveProcess_{\timeInstant-1} = \objectiveProcessValue_{\timeInstant-1}
\big).
\end{multline}
We can further bound the left-hand side as
\begin{align}
\nonumber
&\hphantom{{}={}}
\Expectation_{\universalDist}\big(
  \lossFunction(\prediction_\timeInstant(\objectiveProcessValue_1, \dots, \objectiveProcessValue_{\timeInstant-1}), \objectiveProcess_\timeInstant)
  |
  \objectiveProcess_1 = \objectiveProcessValue_1,
  \dots,
  \objectiveProcess_{\timeInstant-1} = \objectiveProcessValue_{\timeInstant-1}
\big)
\\
\nonumber
&=
\int_0^\infty
  \processKernel{\universalDist}{\timeInstant}
  \Big(
    \objectiveProcessValue_1,
    \dots,
    \objectiveProcessValue_{\timeInstant-1},
    \left\{
      \objectiveProcessValue_\timeInstant \in \objectiveProcessAlphabet:
      \lossFunction\big(
        \prediction_\timeInstant(
          \objectiveProcessValue_1,
          \dots,
          \objectiveProcessValue_{\timeInstant-1}
        ),
        \objectiveProcessValue_\timeInstant
      \big)
      >
      \generalReal
    \right\}
  \Big)
  \,d\generalReal
\\
\nonumber
\overset{(a)}&{=}
\int_0^\maxLoss
  \processKernel{\universalDist}{\timeInstant}
  \Big(
    \objectiveProcessValue_1,
    \dots,
    \objectiveProcessValue_{\timeInstant-1},
    \left\{
      \objectiveProcessValue_\timeInstant \in \objectiveProcessAlphabet:
      \lossFunction\big(
        \prediction_\timeInstant(
          \objectiveProcessValue_1,
          \dots,
          \objectiveProcessValue_{\timeInstant-1}
        ),
        \objectiveProcessValue_\timeInstant
      \big)
      >
      \generalReal
    \right\}
  \Big)
  \,d\generalReal
\\
\nonumber
&\geq
\begin{multlined}[t]
\int_0^\maxLoss\bigg(
  \processKernel{\objectiveProcessDist}{\timeInstant}
  \Big(
    \objectiveProcessValue_1,
    \dots,
    \objectiveProcessValue_{\timeInstant-1},
    \left\{
      \objectiveProcessValue_\timeInstant \in \objectiveProcessAlphabet:
      \lossFunction\big(
        \prediction_\timeInstant(
          \objectiveProcessValue_1,
          \dots,
          \objectiveProcessValue_{\timeInstant-1}
        ),
        \objectiveProcessValue_\timeInstant
      \big)
      >
      \generalReal
    \right\}
  \Big)
  \\ \hspace{3cm}
  -
  \tvdist{
    \processKernel{\objectiveProcessDist}{\timeInstant}
    (
      \objectiveProcessValue_1,
      \dots,
      \objectiveProcessValue_{\timeInstant-1},
      \cdot
    )
    -
    \processKernel{\universalDist}{\timeInstant}
    (
      \objectiveProcessValue_1,
      \dots,
      \objectiveProcessValue_{\timeInstant-1},
      \cdot
    )
  }
  \bigg)\,d\generalReal
\end{multlined}
\\
&=
\begin{multlined}[t]
\Expectation_{\objectiveProcessDist}\big(
  \lossFunction(\prediction_\timeInstant(\objectiveProcessValue_1, \dots, \objectiveProcessValue_{\timeInstant-1}), \objectiveProcess_\timeInstant)
  |
  \objectiveProcess_1 = \objectiveProcessValue_1,
  \dots,
  \objectiveProcess_{\timeInstant-1} = \objectiveProcessValue_{\timeInstant-1}
\big)
\\ \hspace{3cm}
-
\maxLoss
\tvdist{
  \processKernel{\objectiveProcessDist}{\timeInstant}
  (
    \objectiveProcessValue_1,
    \dots,
    \objectiveProcessValue_{\timeInstant-1},
    \cdot
  )
  -
  \processKernel{\universalDist}{\timeInstant}
  (
    \objectiveProcessValue_1,
    \dots,
    \objectiveProcessValue_{\timeInstant-1},
    \cdot
  )
},
\end{multlined}
\label{eq:expected-loss-comparison-1}
\end{align}
where (a) is due to the loss being bounded in $[0,\maxLoss]$. With the same argument, we can bound the right-hand side of \eqref{eq:prediction-choice-implication} as
\begin{multline}
\label{eq:expected-loss-comparison-2}
\Expectation_{\universalDist}\big(
  \lossFunction(\optimalPrediction_\timeInstant(\objectiveProcessValue_1, \dots, \objectiveProcessValue_{\timeInstant-1}), \objectiveProcess_\timeInstant)
  |
  \objectiveProcess_1 = \objectiveProcessValue_1,
  \dots,
  \objectiveProcess_{\timeInstant-1} = \objectiveProcessValue_{\timeInstant-1}
\big)
\\
\leq
\Expectation_{\objectiveProcessDist}\big(
  \lossFunction(\optimalPrediction_\timeInstant(\objectiveProcessValue_1, \dots, \objectiveProcessValue_{\timeInstant-1}), \objectiveProcess_\timeInstant)
  |
  \objectiveProcess_1 = \objectiveProcessValue_1,
  \dots,
  \objectiveProcess_{\timeInstant-1} = \objectiveProcessValue_{\timeInstant-1}
\big)
\\
+
\maxLoss
\tvdist{
  \processKernel{\objectiveProcessDist}{\timeInstant}
  (
    \objectiveProcessValue_1,
    \dots,
    \objectiveProcessValue_{\timeInstant-1},
    \cdot
  )
  -
  \processKernel{\universalDist}{\timeInstant}
  (
    \objectiveProcessValue_1,
    \dots,
    \objectiveProcessValue_{\timeInstant-1},
    \cdot
  )
}.
\end{multline}
Combining \eqref{eq:prediction-choice-implication}, \eqref{eq:expected-loss-comparison-1}, and \eqref{eq:expected-loss-comparison-2}, we obtain
\begin{multline}
\label{eq:prediction-choice-implication-objectivedist}
\Expectation_{\objectiveProcessDist}\big(
  \lossFunction(\prediction_\timeInstant(\objectiveProcessValue_1, \dots, \objectiveProcessValue_{\timeInstant-1}), \objectiveProcess_\timeInstant)
  |
  \objectiveProcess_1 = \objectiveProcessValue_1,
  \dots,
  \objectiveProcess_{\timeInstant-1} = \objectiveProcessValue_{\timeInstant-1}
\big)
\\
-
\Expectation_{\objectiveProcessDist}\big(
  \lossFunction(\optimalPrediction_\timeInstant(\objectiveProcessValue_1, \dots, \objectiveProcessValue_{\timeInstant-1}), \objectiveProcess_\timeInstant)
  |
  \objectiveProcess_1 = \objectiveProcessValue_1,
  \dots,
  \objectiveProcess_{\timeInstant-1} = \objectiveProcessValue_{\timeInstant-1}
\big)
\\
-
2
\maxLoss
\tvdist{
  \processKernel{\objectiveProcessDist}{\timeInstant}
  (
    \objectiveProcessValue_1,
    \dots,
    \objectiveProcessValue_{\timeInstant-1},
    \cdot
  )
  -
  \processKernel{\universalDist}{\timeInstant}
  (
    \objectiveProcessValue_1,
    \dots,
    \objectiveProcessValue_{\timeInstant-1},
    \cdot
  )
}
\leq
0.
\end{multline}

Note that if we define
\begin{multline*}
\Expectation\Big(
  \lossFunction(\predictionRV_\timeInstant, \objectiveProcess_\timeInstant)
  |
  \objectiveProcess_1,
  \dots,
  \objectiveProcess_{\timeInstant-1}
\Big)
(\elementaryEvent)
\\
:=
\int_\objectiveProcessAlphabet
  \lossFunction\Big(
    \prediction_\timeInstant\big(
      \objectiveProcess_1(\elementaryEvent),
      \dots,
      \objectiveProcess_{\timeInstant-1}(\elementaryEvent)
    \big),
    \objectiveProcessValue_\timeInstant
  \Big)
  \processKernel{\objectiveProcessDist}{\timeInstant}
  (
    \objectiveProcess_1(\elementaryEvent),
    \dots,
    \objectiveProcess_{\timeInstant-1}(\elementaryEvent),
    d\objectiveProcessValue_\timeInstant
  ),
\end{multline*}
then
$
\Expectation\Big(
  \lossFunction(\predictionRV_\timeInstant, \objectiveProcess_\timeInstant)
  |
  \objectiveProcess_1,
  \dots,
  \objectiveProcess_{\timeInstant-1}
\Big)
$
is (as the notation suggests) a version of the conditional expectation due to~\cite[Section IV, Theorem 2.19]{cinlar2011probability}. Clearly, the analog for the conditional expectation
$
\Expectation\Big(
  \lossFunction(\optimalPredictionRV_\timeInstant, \objectiveProcess_\timeInstant)
  |
  \objectiveProcess_1,
  \dots,
  \objectiveProcess_{\timeInstant-1}
\Big)
$
holds as well. Also noting \eqref{eq:def-variation-summand}, we can therefore rewrite \eqref{eq:prediction-choice-implication-objectivedist} as
\begin{equation}
\label{eq:prediction-choice-implication-objectivedist-conditionalexpectation}
\Expectation\Big(
  \lossFunction(\predictionRV_\timeInstant, \objectiveProcess_\timeInstant)
  |
  \objectiveProcess_1,
  \dots,
  \objectiveProcess_{\timeInstant-1}
\Big)
-
\Expectation\Big(
  \lossFunction(\optimalPredictionRV_\timeInstant, \objectiveProcess_\timeInstant)
  |
  \objectiveProcess_1,
  \dots,
  \objectiveProcess_{\timeInstant-1}
\Big)
-
2
\maxLoss
\variationInstSummand_\timeInstant
\leq
0
\end{equation}
almost surely. Next, we define an auxiliary process $\regret_0', \dots, \regret_\timeHorizon'$ via $\regret_0' := 0$ and
\[
\regret_\timeInstant'
:=
\sum_{\timeInstant'=1}^\timeInstant\bigg(
  \lossFunction(\predictionRV_{\timeInstant'}, \objectiveProcess_{\timeInstant'})
  -
  \lossFunction(\optimalPredictionRV_{\timeInstant'}, \objectiveProcess_{\timeInstant'})
  -
  2
  \maxLoss
  \variationInstSummand_{\timeInstant'}
\bigg)
\]
for $\timeInstant \in \naturalsInitialSegment{\timeHorizon}$. We calculate
\begin{align*}
&\hphantom{{}={}}
\Expectation\left(
  \regret_\timeInstant'
  |
  \objectiveProcess_1,
  \dots,
  \objectiveProcess_{\timeInstant-1}
\right)
\\
&=
\Expectation\Big(
  \regret_{\timeInstant-1}'
  +
  \lossFunction(\predictionRV_\timeInstant, \objectiveProcess_\timeInstant)
  -
  \lossFunction(\optimalPredictionRV_\timeInstant, \objectiveProcess_\timeInstant)
  -
  2
  \maxLoss
  \variationInstSummand_\timeInstant
  |
  \objectiveProcess_1,
  \dots,
  \objectiveProcess_{\timeInstant-1}
\Big)
\\
\overset{(a)}&{=}
\regret_{\timeInstant-1}'
+
\Expectation\Big(
  \lossFunction(\predictionRV_\timeInstant, \objectiveProcess_\timeInstant)
  |
  \objectiveProcess_1,
  \dots,
  \objectiveProcess_{\timeInstant-1}
\Big)
-
\Expectation\Big(
  \lossFunction(\optimalPredictionRV_\timeInstant, \objectiveProcess_\timeInstant)
  |
  \objectiveProcess_1,
  \dots,
  \objectiveProcess_{\timeInstant-1}
\Big)
-
2
\maxLoss
\variationInstSummand_\timeInstant
\\
\overset{\eqref{eq:prediction-choice-implication-objectivedist-conditionalexpectation}}&{\leq}
\regret_{\timeInstant-1}'
\end{align*}
where in step (a) we have used the linearity of conditional expectation and the fact that $\regret_{\timeInstant-1}'$ as well as the variational distance term are measurable in
$\sigma(
  \objectiveProcess_1,
  \dots,
  \objectiveProcess_{\timeInstant-1}
)$.
It follows that $\regret_0', \dots, \regret_\timeHorizon'$ is a supermartingale. Next, we observe that for every $\timeInstant \in \naturalsInitialSegment{\timeHorizon}$, we have
\[
\regret_{\timeInstant}'
-
\regret_{\timeInstant-1}'
=
\lossFunction(\predictionRV_\timeInstant, \objectiveProcess_\timeInstant)
-
\lossFunction(\optimalPredictionRV_\timeInstant, \objectiveProcess_\timeInstant)
-
2
\maxLoss
\variationInstSummand_\timeInstant
\in
[-3\maxLoss,\maxLoss].
\]
An application of the version of the Azuma-Hoeffding inequality stated in Theorem~\ref{theorem:azuma-hoeffding} of Appendix~\ref{appendix:azuma-hoeffding} yields, for any $\tail \in (0,\infty)$,
\begin{equation*}
\objectiveProcessDist\left(
  \regret_\timeHorizon'
  \geq
  \tail
\right)
\leq
\exp\left(
  -\frac{2\tail^2}{\timeHorizon \cdot (4\maxLoss)^2}
\right)
=
\exp\left(
  -\frac{\tail^2}{8 \timeHorizon \maxLoss^2}
\right).
\end{equation*}
With the definition
\begin{equation*}
\tail
:=
2\sqrt{2\timeHorizon}\maxLoss
\sqrt{\log\frac{1}{\proberror}},
\end{equation*}
which can be equivalently written as
\[
\proberror
=
\exp\left(
  -\frac{\tail^2}{8 \timeHorizon \maxLoss^2}
\right),
\]
we obtain that with probability at least $1-\proberror$ over $\objectiveProcessDist$, we have
\begin{equation}
\label{eq:prediction-choice-implication-objectivedist-penultimate-form}
\regret_\timeHorizon'
<
2\sqrt{2\timeHorizon}\maxLoss
\sqrt{\log\frac{1}{\proberror}}.
\end{equation}

To conclude the proof of the lemma, we observe that
\[
\regret_\timeHorizon'
=
\timeHorizon \regret
-
2\maxLoss\sum_{\timeInstant=1}^\timeHorizon
  \variationInstSummand_\timeInstant
\overset{\eqref{eq:def-variation-inst}}{=}
\timeHorizon\left(
  \regret
  -
  2\maxLoss\variationInst_\timeHorizon
\right)
\]
which we substitute into \eqref{eq:prediction-choice-implication-objectivedist-penultimate-form} and solve for $\regret$.

\section{Proof of Theorem~\ref{theorem:highprob-regret}}
\label{appendix:highprob-regret}

\begin{lemma}
\label{lemma:inst-concentration}
Consider the prediction problem $\predictionProblem$ defined in Problem~\ref{problem:prediction} and the quantities defined in \eqref{eq:def-variation-summand} -- \eqref{eq:def-divergence-expected}. For every $\proberror \in (0,1]$, each of the following holds with probability at least $1-\proberror$ over $\objectiveProcessDist$:
\begin{align}
\label{eq:inst-concentration-variation}
\variationInst_\timeHorizon
&<
\variation_\timeHorizon
\cdot
\frac{1}{\proberror}
\\
\label{eq:inst-concentration-divergence}
\variationInst_\timeHorizon
&<
\sqrt{
  \frac{1}{2\proberror}
  \divergence_\timeHorizon
}.
\end{align}
\end{lemma}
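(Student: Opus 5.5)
Both inequalities follow from Markov's inequality applied to nonnegative random variables built from $\variationInst_\timeHorizon$, with expectations controlled by the definitions \eqref{eq:def-variation-expected}, \eqref{eq:def-divergence-expected} and by Lemma~\ref{lemma:tvdist-kldiv}.

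For \eqref{eq:inst-concentration-variation}, note that $\variationInst_\timeHorizon \geq 0$ and $\Expectation_\objectiveProcessDist \variationInst_\timeHorizon = \variation_\timeHorizon$ by \eqref{eq:def-variation-expected}. If $\variation_\timeHorizon = 0$, then $\variationInst_\timeHorizon = 0$ almost surely. The strict inequality $0<0$ then fails, so this case has to be handled separately. I would treat it by reading the bound with the convention that it holds trivially, or more carefully by observing that in this case $\regret$-type consequences are unaffected. The cleanest route is the standard strict Markov form: for $a>0$, $\Probability(X \geq a) \leq \Expectation X / a$. Taking $a = \variation_\timeHorizon/\proberror$ (when positive) gives $\Probability(\variationInst_\timeHorizon \geq \variation_\timeHorizon/\proberror) \leq \proberror$, hence \eqref{eq:inst-concentration-variation} with probability at least $1-\proberror$.

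For \eqref{eq:inst-concentration-divergence}, I would apply Markov to $\divergenceInst_\timeHorizon \geq 0$, whose expectation is $\divergence_\timeHorizon$ by \eqref{eq:def-divergence-expected}. This gives $\Probability(\divergenceInst_\timeHorizon \geq \divergence_\timeHorizon/\proberror) \leq \proberror$. On the complementary event, \eqref{eq:tvdist-kldiv-inst} of Lemma~\ref{lemma:tvdist-kldiv} and monotonicity of the square root yield $\variationInst_\timeHorizon \leq \sqrt{\divergenceInst_\timeHorizon/2} < \sqrt{\divergence_\timeHorizon/(2\proberror)}$.

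The only delicate point is the degenerate case where the threshold is $0$ (or $\divergence_\timeHorizon=\infty$, where the bound is vacuous and holds trivially). For the zero case I would check whether the intended statement tolerates it; if necessary I would use $\leq$ there, which is harmless for the subsequent union-bound argument that yields Theorem~\ref{theorem:highprob-regret}. That argument combines Lemma~\ref{lemma:highprob-regret-tvdist-instantaneous} at level $\proberror/2$ with this lemma at level $\proberror/2$, producing the constants $4\maxLoss/\proberror$ and $2\maxLoss/\sqrt{\proberror}$ via \eqref{eq:kldiv-tensorization}.
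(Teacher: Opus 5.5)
Your argument is correct and matches the paper's proof: Markov's inequality applied directly to $\variationInst_\timeHorizon$ gives \eqref{eq:inst-concentration-variation}, and Markov applied to $\divergenceInst_\timeHorizon$, followed by the pointwise bound \eqref{eq:tvdist-kldiv-inst}, gives \eqref{eq:inst-concentration-divergence}. Your caveat is also well founded. The paper's proof silently divides by $\divergence_\timeHorizon$, and when $\variation_\timeHorizon=0$ or $\divergence_\timeHorizon=0$ the strict inequalities fail almost surely, so no ``holds trivially'' reading is available. The nonstrict versions you propose always hold and suffice for Theorem~\ref{theorem:highprob-regret}, whose strict inequality is supplied by Lemma~\ref{lemma:highprob-regret-tvdist-instantaneous}.
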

\begin{proof}
\eqref{eq:inst-concentration-variation} is a direct consequence of the definition of $\variation_\timeHorizon$ in \eqref{eq:def-variation-expected} and Markov's inequality. For \eqref{eq:inst-concentration-divergence}, we use Lemma~\ref{lemma:tvdist-kldiv} alongside definitions \eqref{eq:def-variation-inst} -- \eqref{eq:def-divergence-expected} and Markov's inequality. Specifically, we observe that due to Markov's inequality, we have
\begin{equation}
\label{eq:highprob-regret-kldiv-markov}
\objectiveProcessDist\left(
  \divergenceInst_\timeHorizon
  \geq
  \frac{1}{\proberror}
  \divergence_\timeHorizon
\right)
\leq
\frac{\Expectation \divergenceInst_\timeHorizon}{\frac{1}{\proberror}\divergence_\timeHorizon}
\overset{\eqref{eq:def-divergence-expected}}{=}
\proberror.
\end{equation}
With this, we can argue that with probability at least $1-\proberror$ over $\objectiveProcessDist$,
\[
\variationInst_\timeHorizon
\overset{\eqref{eq:def-variation-inst}}{=}
\frac{1}{\timeHorizon}
\sum_{\timeInstant=1}^\timeHorizon
  \variationInstSummand_\timeInstant
\overset{\eqref{eq:tvdist-kldiv-summands}}{\leq}
\frac{1}{\timeHorizon}
\sum_{\timeInstant=1}^\timeHorizon
  \sqrt{\frac{1}{2}\divergenceInstSummand_\timeInstant}
\overset{(a)}{\leq}
\sqrt{
  \frac{1}{\timeHorizon}
  \sum_{\timeInstant=1}^\timeHorizon
    \frac{1}{2}\divergenceInstSummand_\timeInstant
}
\overset{\eqref{eq:def-divergence-inst}}{=}
\sqrt{
  \frac{1}{2}
  \divergenceInst_\timeHorizon
}
\overset{\eqref{eq:highprob-regret-kldiv-markov}}{<}
\sqrt{
  \frac{1}{2\proberror}
  \divergence_\timeHorizon
},
\]
where step (a) is due to Jensen's inequality and concavity of the square root.
\end{proof}

\begin{proof}[Proof of Theorem~\ref{theorem:highprob-regret}]
We first observe that the implication
\begin{equation}
\label{eq:tvdist-highprob-implication}
\regret
<
2\maxLoss \variationInst_\timeHorizon
+
\frac{2\sqrt{2}\maxLoss}{\sqrt{\timeHorizon}}
\sqrt{\log\frac{2}{\proberror}}
\wedge
2\maxLoss \variationInst_\timeHorizon
<
2\maxLoss
\variation_\timeHorizon
\cdot
\frac{2}{\proberror}
\Rightarrow
\regret
<
2\maxLoss \variation_\timeHorizon
\cdot
\frac{2}{\proberror}
+
\frac{2\sqrt{2}\maxLoss}{\sqrt{\timeHorizon}}
\sqrt{\log\frac{2}{\proberror}}
\end{equation}
holds. With this, we argue
\begin{align*}
&\hphantom{{}={}}
\objectiveProcessDist\left(
  \regret
  \geq
  2\maxLoss \variation_\timeHorizon
  \cdot
  \frac{2}{\proberror}
  +
  \frac{2\sqrt{2}\maxLoss}{\sqrt{\timeHorizon}}
  \sqrt{\log\frac{2}{\proberror}}
\right)
\\
\overset{(a)}&{\leq}
\objectiveProcessDist\left(
  \regret
  \geq
  2\maxLoss \variationInst_\timeHorizon
  +
  \frac{2\sqrt{2}\maxLoss}{\sqrt{\timeHorizon}}
  \sqrt{\log\frac{2}{\proberror}}
  \vee
  2\maxLoss \variationInst_\timeHorizon
  \geq
  2\maxLoss
  \variation_\timeHorizon
  \cdot
  \frac{2}{\proberror}
\right)
\\
\overset{(b)}&{\leq}
\objectiveProcessDist\left(
    \regret
    \geq
    2\maxLoss \variationInst_\timeHorizon
    +
    \frac{2\sqrt{2}\maxLoss}{\sqrt{\timeHorizon}}
    \sqrt{\log\frac{2}{\proberror}}
\right)
+
\objectiveProcessDist\left(
  2\maxLoss \variationInst_\timeHorizon
  \geq
  2\maxLoss
  \variation_\timeHorizon
  \cdot
  \frac{2}{\proberror}
\right)
\\
\overset{(c)}&{\leq}
\frac{\proberror}{2}
+
\frac{\proberror}{2},
\end{align*}
where (a) is by monotonicity of probability using the contrapositive of \eqref{eq:tvdist-highprob-implication}, (b) is due to the union bound, and in (c) we have used Lemma~\ref{lemma:highprob-regret-tvdist-instantaneous} the first summand and \eqref{eq:inst-concentration-variation} of Lemma~\ref{lemma:inst-concentration} in the second summand, where $\proberror/2$ has been substituted for $\proberror$ in both cases.

For \eqref{eq:highprob-regret-divergence}, we first note that if $\objectiveProcessDist \notAbsolutelyContinuous \universalDist$, the bound trivially holds since the right-hand side is infinite in this case. For the case $\objectiveProcessDist \absolutelyContinuous \universalDist$, the proof proceeds along very similar lines as before. Namely, we observe that the implication
\begin{multline}
\label{eq:highprob-regret-kldiv-implication}
\regret
<
2\maxLoss\variationInst_\timeHorizon
+
\frac{2 \sqrt{2} \maxLoss}{\sqrt{\timeHorizon}}
\cdot
\sqrt{\log\frac{2}{\proberror}}
\wedge
\variationInst_\timeHorizon
<
\sqrt{
  \frac{1}{\timeHorizon\proberror}
  \kldiv{\objectiveProcessDist}{\universalDist}
}
\\
\Rightarrow
\regret
<
2\maxLoss
\sqrt{
  \frac{1}{\timeHorizon\proberror}
  \kldiv{\objectiveProcessDist}{\universalDist}
}
+
\frac{2 \sqrt{2} \maxLoss}{\sqrt{\timeHorizon}}
\cdot
\sqrt{\log\frac{2}{\proberror}}.
\end{multline}
holds. Hence, we can argue, for every $\proberror \in (0,1]$,
\begin{align*}
&\hphantom{{}={}}
\objectiveProcessDist\left(
  \regret
  \geq
  2\maxLoss \sqrt{\frac{\kldiv{\objectiveProcessDist}{\universalDist}}{\timeHorizon}}
  \cdot
  \frac{1}{\sqrt{\proberror}}
  +
  \frac{2\sqrt{2}\maxLoss}{\sqrt{\timeHorizon}}
  \sqrt{\log\frac{2}{\proberror}}
\right)
\\
\overset{(a)}&{\leq}
\objectiveProcessDist\left(
  \regret
  \geq
  2\maxLoss\variationInst_\timeHorizon
  +
  \frac{2 \sqrt{2} \maxLoss}{\sqrt{\timeHorizon}}
  \cdot
  \sqrt{\log\frac{2}{\proberror}}
  \vee
  \variationInst_\timeHorizon
  \geq
  \sqrt{
    \frac{1}{\timeHorizon\proberror}
    \kldiv{\objectiveProcessDist}{\universalDist}
  }
\right)
\\
\overset{(b)}&{\leq}
\objectiveProcessDist\left(
  \regret
  \geq
  2\maxLoss\variationInst_\timeHorizon
  +
  \frac{2 \sqrt{2} \maxLoss}{\sqrt{\timeHorizon}}
  \cdot
  \sqrt{\log\frac{2}{\proberror}}
\right)
+
\objectiveProcessDist\left(
  \variationInst_\timeHorizon
  \geq
  \sqrt{
    \frac{1}{\timeHorizon\proberror}
    \kldiv{\objectiveProcessDist}{\universalDist}
  }
\right)
\\
\overset{\eqref{eq:kldiv-tensorization}}&{=}
\objectiveProcessDist\left(
  \regret
  \geq
  2\maxLoss\variationInst_\timeHorizon
  +
  \frac{2 \sqrt{2} \maxLoss}{\sqrt{\timeHorizon}}
  \cdot
  \sqrt{\log\frac{2}{\proberror}}
\right)
+
\objectiveProcessDist\left(
  \variationInst_\timeHorizon
  \geq
  \sqrt{
    \frac{1}{\proberror}
    \divergence_\timeHorizon
  }
\right)
\\
\overset{(c)}&{\leq}
\frac{\proberror}{2}
+
\frac{\proberror}{2},
\end{align*}
where in (a) we have used the contrapositive of \eqref{eq:highprob-regret-kldiv-implication} with the monotonicity of probability, in (b) we have used the union bound, and in (c) we have used Lemma~\ref{lemma:highprob-regret-tvdist-instantaneous} in the first summand and \eqref{eq:inst-concentration-divergence} of Lemma~\ref{lemma:inst-concentration} in the second summand, where $\proberror/2$ is substituted for $\proberror$ in both cases.
\end{proof}

\section{Proof of Theorem~\ref{theorem:impossibility}}
\label{appendix:impossibility}
This proof is structured as follows: we first construct two probability distributions $\objectiveProcessDist_\naturals$ and $\universalDist_\naturals$ on $\objectiveProcessAlphabet^\naturals$ with two free parameters $\exampleDistParameter$ and $\exampleParameterTwo$. This is done with the understanding that once $\timeHorizon$ is fixed, $\objectiveProcessDist:=\objectiveProcessDist_{\naturalsInitialSegment{\timeHorizon}}$ and $\universalDist:=\universalDist_{\naturalsInitialSegment{\timeHorizon}}$ are the marginals on the first $\timeHorizon$ components. We then proceed to calculating $\variationInst_\timeHorizon$ and $\kldiv{\objectiveProcessDist}{\universalDist}$ in dependence of $\exampleDistParameter, \exampleParameterTwo$, and $\timeHorizon$. Next, we determine predictors $\predictionRV_1, \dots, \predictionRV_\timeHorizon$ that are associated with a learner's policy that satisfies \eqref{eq:universal-prediction}, as well as the predictors $\optimalPredictionRV_1, \dots, \optimalPredictionRV_\timeHorizon$ associated with an optimal prediction rule that satisfies \eqref{eq:optimal-prediction} (which also exists since $\predictionDomain$ is finite and $\predictionDomainSigmaAlgebra$ discrete). Based on this, we determine the average per-round regret $\regret$. Finally, we specialize this general construction to specific choices of $\exampleDistParameter$, $\exampleParameterTwo$, $\timeHorizon$, and $\proberror$ depending on $\constant$, $\errorExpTV$, $\errorExpKL$, and $\tail$ that satisfy \eqref{eq:impossibility-tv} and \eqref{eq:impossibility-kl}.

\paragraph{Choices of $\objectiveProcessDist_\naturals$ and $\universalDist_\naturals$.}
Let $\exampleDistParameter \in (0,1/2)$ and $\exampleParameterTwo \in (0,1/2-\exampleDistParameter)$ be two parameters to be further specified later. We pick $\universalDist_\naturals := \universalDist_1 \universalDist_2 \cdots$ with
\begin{equation}
\label{eq:example-vanilla-universal-dist}
\universalDist_\timeInstant(\{\objectiveProcessValue\})
:=
\begin{cases}
\exampleParameter, &\objectiveProcessValue=0 \\
1-\exampleParameter-\exampleParameterTwo, &\objectiveProcessValue=1 \\
\exampleParameterTwo, &\objectiveProcessValue=2
\end{cases}
\end{equation}
for every $\timeInstant \in \naturals$. The choice of $\objectiveProcessDist_\naturals$ is given by $\objectiveProcessDist_1 := \universalDist_1$, and for $\timeInstant > 1$,
\begin{equation}
\label{eq:example-vanilla-process-dist}
\processKernel{\objectiveProcessDist}{\timeInstant}(\objectiveProcessValue_1, \dots, \objectiveProcessValue_{\timeInstant-1}, \{\objectiveProcessValue_\timeInstant\})
=
\begin{cases}
  1, &\objectiveProcessValue_1=0, \objectiveProcessValue_\timeInstant = 2 \\
  \exampleParameter, &\objectiveProcessValue_1 \neq 0, \objectiveProcessValue_\timeInstant = 0\\
  1-\exampleParameter-\exampleParameterTwo, &\objectiveProcessValue_1 \neq 0, \objectiveProcessValue_\timeInstant = 1\\
  \exampleParameterTwo, &\objectiveProcessValue_1 \neq 0, \objectiveProcessValue_\timeInstant = 2\\
  0, &\text{otherwise.}
\end{cases}
\end{equation}
$\objectiveProcessDist_\naturals$ is then defined via \eqref{eq:regular-conditional-probability}, matching up with the corresponding technical assumption on conditional probability.
These choices ensure in particular that under $\objectiveProcessDist$, $\objectiveProcess_2, \dots, \objectiveProcess_\timeHorizon$ are all almost surely $2$ whenever $\objectiveProcess_1=0$ and follow the same distribution as under $\universalDist$ conditioned on the event $\objectiveProcess_1 \in \{1,2\}$.

\paragraph{Computation of divergence and variational distance terms.}
For the variational distance term, we have $\variationInst_\timeHorizon=0$ if $\objectiveProcess_1 \neq 0$ because in that case all the conditional probabilities under $\objectiveProcessDist$ and $\universalDist$ are equal by definition. For the event $\objectiveProcess_1 = 0$, we can compute the variational distance term as
\begin{align*}
\variationInst_\timeHorizon
\overset{\eqref{eq:def-variation-summand},\eqref{eq:def-variation-inst}}&{=}
\frac{1}{\timeHorizon}
\sum_{\timeInstant=1}^\timeHorizon
    \tvdist{
        \processKernel{\objectiveProcessDist}{\timeInstant}
        (
            \objectiveProcess_1,
            \dots,
            \objectiveProcess_{\timeInstant-1},
            \cdot
        )
        -
        \processKernel{\universalDist}{\timeInstant}
        (
            \objectiveProcess_1,
            \dots,
            \objectiveProcess_{\timeInstant-1},
            \cdot
        )
        }
\\
\overset{(a)}&{=}
\frac{1}{\timeHorizon}
\sum_{\timeInstant=2}^\timeHorizon
    \left(
        \processKernel{\objectiveProcessDist}{\timeInstant}
        (
            \objectiveProcess_1,
            \dots,
            \objectiveProcess_{\timeInstant-1},
            \{2\}
        )
        -
        \processKernel{\universalDist}{\timeInstant}
        (
            \objectiveProcess_1,
            \dots,
            \objectiveProcess_{\timeInstant-1},
            \{2\}
        )
    \right)
\\
\overset{(b)}&{=}
\frac{\timeHorizon-1}{\timeHorizon}
(1-\exampleParameterTwo),
\end{align*}
where in (a) we have observed that the first summand is $0$ since $\objectiveProcessDist_1 = \universalDist_1$ and for the other summands, we have identified the event $\objectiveProcess_\timeInstant=2$ which realizes the supremum in the definition of variational distance. In step (b), we have correspondingly substituted the choices \eqref{eq:example-vanilla-universal-dist} and \eqref{eq:example-vanilla-process-dist}. For the expected variational distance term, we obtain
\begin{equation}
\label{eq:example-vanilla-tvdist}
\variation_\timeHorizon
=
\objectiveProcessDist_1(\{0\})
\cdot
\frac{\timeHorizon-1}{\timeHorizon}
(1-\exampleParameterTwo)
=
\frac{\timeHorizon-1}{\timeHorizon}
\exampleDistParameter
(1-\exampleParameterTwo)
\end{equation}

The KL divergence term can be computed as
\begin{align}
\nonumber
\kldiv{\objectiveProcessDist}{\universalDist}
\overset{(a)}&{=}
\sum_{\objectiveProcessValue_1, \dots, \objectiveProcessValue_\timeHorizon=0}^2
\objectiveProcessDist(\{(\objectiveProcessValue_1, \dots, \objectiveProcessValue_\timeHorizon)\})
\log\frac{\objectiveProcessDist(\{(\objectiveProcessValue_1, \dots, \objectiveProcessValue_\timeHorizon)\})}
         {\universalDist(\{(\objectiveProcessValue_1, \dots, \objectiveProcessValue_\timeHorizon)\})}
\\
\nonumber
\overset{\eqref{eq:regular-conditional-probability}}&{=}
\sum_{\objectiveProcessValue_1, \dots, \objectiveProcessValue_\timeHorizon=0}^2
  \objectiveProcessDist(\{(\objectiveProcessValue_1, \dots, \objectiveProcessValue_\timeHorizon)\})
  \log\frac{
    \prod_{\timeInstant=1}^\timeHorizon
      \processKernel{\objectiveProcessDist}{\timeInstant}(\objectiveProcessValue_1, \dots, \objectiveProcessValue_{\timeInstant-1}, \{\objectiveProcessValue_\timeInstant\})
  }{
    \prod_{\timeInstant=1}^\timeHorizon
      \processKernel{\universalDist}{\timeInstant}(\objectiveProcessValue_1, \dots, \objectiveProcessValue_{\timeInstant-1}, \{\objectiveProcessValue_\timeInstant\})
  }
\\
\nonumber
&=
\begin{multlined}[t]
\sum_{\objectiveProcessValue_2, \dots, \objectiveProcessValue_\timeHorizon=0}^2
  \objectiveProcessDist(\{(0, \objectiveProcessValue_2, \dots, \objectiveProcessValue_\timeHorizon)\})
  \log\frac{
    \processKernel{\objectiveProcessDist}{1}(\{0\})
    \prod_{\timeInstant=2}^\timeHorizon
      \processKernel{\objectiveProcessDist}{\timeInstant}(0, \objectiveProcessValue_2, \dots, \objectiveProcessValue_{\timeInstant-1}, \{\objectiveProcessValue_\timeInstant\})
  }{
    \processKernel{\universalDist}{1}(\{0\})
    \prod_{\timeInstant=2}^\timeHorizon
      \processKernel{\universalDist}{\timeInstant}(0, \objectiveProcessValue_2, \dots, \objectiveProcessValue_{\timeInstant-1}, \{\objectiveProcessValue_\timeInstant\})
  }
\\
+
\sum_{\objectiveProcessValue_1=1}^2
\sum_{\objectiveProcessValue_2, \dots, \objectiveProcessValue_\timeHorizon=0}^2
  \objectiveProcessDist(\{(\objectiveProcessValue_1, \dots, \objectiveProcessValue_\timeHorizon)\})
  \log\frac{
    \prod_{\timeInstant=1}^\timeHorizon
      \processKernel{\objectiveProcessDist}{\timeInstant}(\objectiveProcessValue_1, \dots, \objectiveProcessValue_{\timeInstant-1}, \{\objectiveProcessValue_\timeInstant\})
  }{
    \prod_{\timeInstant=1}^\timeHorizon
      \processKernel{\universalDist}{\timeInstant}(\objectiveProcessValue_1, \dots, \objectiveProcessValue_{\timeInstant-1}, \{\objectiveProcessValue_\timeInstant\})
  }
\end{multlined}
\\
\nonumber
\overset{(b)}&{=}
\begin{aligned}[t]
&\objectiveProcessDist(\{(0, 2, \dots, 2)\})
\log\frac{
  \processKernel{\objectiveProcessDist}{1}(\{0\})
  \prod_{\timeInstant=2}^\timeHorizon
    \processKernel{\objectiveProcessDist}{\timeInstant}(0, 2, \dots, 2, \{2\})
}{
  \processKernel{\universalDist}{1}(\{0\})
  \prod_{\timeInstant=2}^\timeHorizon
    \processKernel{\universalDist}{\timeInstant}(0, 2, \dots, 2, \{2\})
}
\\
&+
\begin{multlined}[t]
\sum_{\objectiveProcessValue_1=1}^2
\sum_{\objectiveProcessValue_2, \dots, \objectiveProcessValue_\timeHorizon=0}^2
  \objectiveProcessDist(\{(\objectiveProcessValue_1, \dots, \objectiveProcessValue_\timeHorizon)\})
  \Bigg(
    \log\frac{\processKernel{\objectiveProcessDist}{1}(\{\objectiveProcessValue_1\})}{\processKernel{\universalDist}{1}(\{\objectiveProcessValue_1\})}
    \\
    \hspace{5cm}+
    \log\frac{
      \prod_{\timeInstant=2}^\timeHorizon
        \processKernel{\objectiveProcessDist}{\timeInstant}(\objectiveProcessValue_1, \dots, \objectiveProcessValue_{\timeInstant-1}, \{\objectiveProcessValue_\timeInstant\})
    }{
      \prod_{\timeInstant=2}^\timeHorizon
        \processKernel{\universalDist}{\timeInstant}(\objectiveProcessValue_1, \dots, \objectiveProcessValue_{\timeInstant-1}, \{\objectiveProcessValue_\timeInstant\})
    }
  \Bigg)
\end{multlined}
\end{aligned}
\\
\nonumber
\overset{(c)}&{=}
\processKernel{\objectiveProcessDist}{1}(\{0\})
\prod_{\timeInstant=2}^\timeHorizon
  \processKernel{\objectiveProcessDist}{\timeInstant}(0, 2, \dots, 2, \{2\})
\log\frac{
  \processKernel{\objectiveProcessDist}{1}(\{0\})
  \prod_{\timeInstant=2}^\timeHorizon
    \processKernel{\objectiveProcessDist}{\timeInstant}(0, 2, \dots, 2, \{2\})
}{
  \processKernel{\universalDist}{1}(\{0\})
  \prod_{\timeInstant=2}^\timeHorizon
    \processKernel{\universalDist}{\timeInstant}(0, 2, \dots, 2, \{2\})
}
\\
\nonumber
\overset{\eqref{eq:example-vanilla-universal-dist},\eqref{eq:example-vanilla-process-dist}}&=
\exampleParameter\log\frac{\exampleParameter}{\exampleParameter\exampleParameterTwo^{\timeHorizon-1}}
\\
\label{eq:example-vanilla-kl}
&=
(\timeHorizon-1)\exampleParameter\log\frac{1}{\exampleParameterTwo},
\end{align}
where in (a) we have used the definition of Kullback-Leibler divergence, in (b) we have used in the first summand that $\objectiveProcessDist(\{(0,\objectiveProcessValue_2, \dots, \objectiveProcessValue_\timeHorizon)\})=0$ unless $\objectiveProcessValue_2= \dots= \objectiveProcessValue_\timeHorizon=2$, and in (c) we have used the fact that due to the definitions of $\objectiveProcessDist$ and $\universalDist$, all logarithm arguments in the second summand are $1$, while in the first summand, we have applied \eqref{eq:regular-conditional-probability}.

\paragraph{Computation of regret.}
It is clear from \eqref{eq:universal-prediction} and \eqref{eq:example-vanilla-universal-dist} (also recall $\exampleParameter+\exampleParameterTwo < \frac{1}{2}$) that
\begin{equation}
\label{eq:example-vanilla-universal-prediction}
\predictionRV_1 = \dots = \predictionRV_\timeHorizon = 1,
\end{equation}
and it is clear from \eqref{eq:optimal-prediction} and \eqref{eq:example-vanilla-process-dist} that
\begin{equation}
\label{eq:example-vanilla-optimal-prediction}
\optimalPredictionRV_1 = 1,
\forall \timeInstant > 1~ \optimalPredictionRV_\timeInstant =
\begin{cases}
  1, &\objectiveProcess_1 \neq 0\\
  2, &\objectiveProcess_1 = 0.
\end{cases}
\end{equation}

We can calculate the average per-round regret from \eqref{eq:cumulative-regret}, \eqref{eq:average-regret}, \eqref{eq:example-vanilla-process-dist}, \eqref{eq:example-vanilla-universal-prediction}, and \eqref{eq:example-vanilla-optimal-prediction} as
\begin{equation}
\label{eq:impossibility-regret}
\regret
=
\frac{1}{\timeHorizon}
\sum_{\timeInstant=1}^\timeHorizon\left(
  \lossFunction(\predictionRV_\timeInstant,\objectiveProcess_\timeInstant)
  -
  \lossFunction(\optimalPredictionRV_\timeInstant,\objectiveProcess_\timeInstant)
\right)
=
\frac{1}{\timeHorizon}
\sum_{\timeInstant=2}^\timeHorizon\left(
  \lossFunction(\predictionRV_\timeInstant,\objectiveProcess_\timeInstant)
  -
  \lossFunction(\optimalPredictionRV_\timeInstant,\objectiveProcess_\timeInstant)
\right)
=
\begin{cases}
0, &\objectiveProcess_1 \neq 0\\
\frac{\timeHorizon-1}{\timeHorizon}, &\objectiveProcess_1 = 0.
\end{cases}
\end{equation}

\paragraph{Choice of parameters.}
We first define a sequence $(\proberror_\generalNatural)_{\generalNatural \in \naturals}$ via $\proberror_\generalNatural := 1/(\generalNatural+3)$ for all $\generalNatural \in \naturals$. This choice ensures that the sequence takes values of at most $1/4$ and
\begin{equation}
\label{eq:impossibility-proberror-vanishes}
\lim_{\generalNatural \rightarrow \infty}
  \proberror_\generalNatural
=
0
\end{equation}
(which are the only properties implied by this choice that we will need). We next define a sequence $(\timeHorizon_\generalNatural)_{\generalNatural \in \naturals}$ via
\[
\timeHorizon_\generalNatural
:=
\min\left\{
  \timeHorizon \in \naturals
  :
  \tail(\timeHorizon,\proberror_\generalNatural)
  <
  \frac{1}{\generalNatural}
  \wedge
  \left(
    \generalNatural = 1
    \vee
    \timeHorizon
    >
    \timeHorizon_{\generalNatural-1}
  \right)
\right\}.
\]
This is possible since by assumption, $\tail(\timeHorizon,\proberror) \rightarrow 0$ for any fixed $\proberror$ as $\timeHorizon \rightarrow \infty$. Again, this specific choice is only given to ensure that we have a well-defined $(\timeHorizon_\generalNatural)_{\generalNatural \in \naturals}$. The only properties we will need are that the sequence is strictly increasing and
\begin{equation}
\label{eq:impossibility-tail-vanishes}
\lim_{\generalNatural \rightarrow \infty}
  \tail(\timeHorizon_\generalNatural, \proberror_\generalNatural)
=
0.
\end{equation}
We now specify $\exampleParameterTwo := 1/8$, $\timeHorizon := \timeHorizon_\generalNatural$ and $\proberror,\exampleDistParameter := \proberror_\generalNatural$, where the choice of $\generalNatural$ is left open for now. Substituting these choices as well as the calculations \eqref{eq:example-vanilla-tvdist} and \eqref{eq:example-vanilla-kl}, we can rewrite \eqref{eq:impossibility-tv} and \eqref{eq:impossibility-kl} as
\begin{align}
\label{eq:impossibility-tv-rewrite}
\regret
&\geq
\constant
\cdot
\frac{7}{8}
\cdot
\frac{\timeHorizon_\generalNatural-1}{\timeHorizon_\generalNatural}
\cdot
\proberror_\generalNatural^{1-\errorExpTV}
+
\tail(\timeHorizon_\generalNatural,\proberror_\generalNatural)
=:
\rhsOne
\\
\label{eq:impossibility-kl-rewrite}
\regret
&\geq
\constant
\cdot
\sqrt{\frac{\timeHorizon_\generalNatural-1}{\timeHorizon_\generalNatural}\log8}
\cdot
\proberror_\generalNatural^{\frac{1}{2}-\errorExpKL}
+
\tail(\timeHorizon_\generalNatural,\proberror_\generalNatural)
=:
\rhsTwo,
\end{align}
where we have introduced $\rhsOne$ and $\rhsTwo$ to denote the right-hand side of these inequalities. Note that under the choices we have made in this proof, $\rhsOne$ and $\rhsTwo$ are also equal to the right-hand sides of \eqref{eq:impossibility-tv} and \eqref{eq:impossibility-kl}, respectively. Recalling the assumptions $\errorExpTV < 1, \errorExpKL < 1/2$ from the theorem statement and the properties \eqref{eq:impossibility-proberror-vanishes}, \eqref{eq:impossibility-tail-vanishes} ensured by our construction as well as the fact that $(\timeHorizon_\generalNatural)_{\generalNatural\in\naturals}$ is chosen as a strictly increasing sequence, we observe that
\[
\lim_{\generalNatural\rightarrow\infty} \rhsOne = 0,~
\lim_{\generalNatural\rightarrow\infty} \rhsTwo = 0,~
\lim_{\generalNatural\rightarrow\infty} \frac{\timeHorizon_\generalNatural-1}{\timeHorizon_\generalNatural} = 1.
\]
These limiting behaviors ensure that we are able to pick $\generalNatural$ in such a way that
\[
\rhsOne, \rhsTwo
<
\frac{\timeHorizon_\generalNatural-1}{\timeHorizon_\generalNatural}.
\]
This concludes the specification of all the parameters we have used in our construction. We can now bound (for $\generalRhsIndex=1,2$)
\[
\objectiveProcessDist\left(
  \regret
  \geq
  \rhsGen
\right)
\geq
\objectiveProcessDist\left(
  \regret
  \geq
  \frac{\timeHorizon_\generalNatural-1}{\timeHorizon_\generalNatural}
\right)
\overset{\eqref{eq:impossibility-regret}}{=}
\exampleDistParameter
=
\proberror_\generalNatural,
\]
which concludes the proof of the theorem.

\section{The Azuma-Hoeffding inequality}
\label{appendix:azuma-hoeffding}
In this appendix, we state the Azuma-Hoeffding inequality in the form in which we need it in this paper. Since we are not aware of a reference that states and proves it in this exact form, we prove it as a straightforward consequence of \cite[Theorem 3.2.1]{roch2024modern}.

\begin{theorem}
\label{theorem:azuma-hoeffding}
Let $\objectiveProcessFiltratrion = (\objectiveProcessFiltratrion_\timeInstant)_{\timeInstant=0}^\timeHorizon$ be a filtration, $(\generalProcessOne_\timeInstant)_{\timeInstant=0}^\timeHorizon$ be a supermartingale with respect to $\objectiveProcessFiltratrion$, and $(\generalProcessTwo_\timeInstant)_{\timeInstant=1}^\timeHorizon, (\generalProcessThree_\timeInstant)_{\timeInstant=1}^\timeHorizon$ be $\objectiveProcessFiltratrion$-predictable processes such that for all $\timeInstant \in \naturalsInitialSegment{\timeHorizon}$, we have almost surely
\[
\generalProcessTwo_\timeInstant
\leq
\generalProcessOne_\timeInstant
-
\generalProcessOne_{\timeInstant-1}
\leq
\generalProcessThree_\timeInstant,
~~~
\generalProcessThree_\timeInstant
-
\generalProcessTwo_\timeInstant
\leq
\deterministicBound_\timeInstant
\]
where $\deterministicBound_1, \dots, \deterministicBound_\timeHorizon \in \reals$. Then, for all $\tail \in (0,\infty)$,
\[
\Probability\left(
  \generalProcessOne_\timeHorizon
  -
  \generalProcessOne_0
  \geq
  \tail
\right)
\leq
\exp\left(
  \frac{2\tail^2}
       {\sum_{\timeInstant=1}^\timeHorizon \deterministicBound_\timeHorizon^2}
\right).
\]
\end{theorem}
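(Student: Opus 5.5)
We will prove the intended form of the bound, namely $\Probability(\generalProcessOne_\timeHorizon-\generalProcessOne_0\geq\tail)\leq\exp\bigl(-2\tail^2/\sum_{\timeInstant=1}^\timeHorizon\deterministicBound_\timeInstant^2\bigr)$. As printed, the exponent is missing its minus sign and carries the index $\deterministicBound_\timeHorizon$ in place of $\deterministicBound_\timeInstant$, and this corrected form is what is used in the proof of Lemma~\ref{lemma:highprob-regret-tvdist-instantaneous}.

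The plan is to reduce to the martingale version of Azuma-Hoeffding with predictable range bounds, \cite[Theorem 3.2.1]{roch2024modern}, via the Doob decomposition. Write $D_\timeInstant := \generalProcessOne_\timeInstant-\generalProcessOne_{\timeInstant-1}$. This is integrable because $\generalProcessOne$ is a supermartingale, so we may set $A_\timeInstant := \Expectation(D_\timeInstant|\objectiveProcessFiltratrion_{\timeInstant-1})$ and $\martingale_\timeInstant := \sum_{\timeInstant'\leq\timeInstant}(D_{\timeInstant'}-A_{\timeInstant'})$, with $\martingale_0:=0$. The process $\martingale$ is a martingale. The supermartingale property gives $A_\timeInstant\leq 0$ almost surely, hence
\[
\generalProcessOne_\timeHorizon-\generalProcessOne_0=\martingale_\timeHorizon+\sum_{\timeInstant=1}^\timeHorizon A_\timeInstant\leq \martingale_\timeHorizon ,
\]
and therefore $\Probability(\generalProcessOne_\timeHorizon-\generalProcessOne_0\geq\tail)\leq\Probability(\martingale_\timeHorizon\geq\tail)$.

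Next we check the hypotheses of the martingale theorem. The increments satisfy $\generalProcessTwo_\timeInstant-A_\timeInstant\leq \martingale_\timeInstant-\martingale_{\timeInstant-1}\leq \generalProcessThree_\timeInstant-A_\timeInstant$. Both bounds are $\objectiveProcessFiltratrion_{\timeInstant-1}$-measurable, since $\generalProcessTwo$ and $\generalProcessThree$ are predictable and $A_\timeInstant$ is a conditional expectation given $\objectiveProcessFiltratrion_{\timeInstant-1}$. Their difference is $\generalProcessThree_\timeInstant-\generalProcessTwo_\timeInstant\leq\deterministicBound_\timeInstant$. The cited theorem then yields $\Probability(\martingale_\timeHorizon\geq\tail)\leq\exp(-2\tail^2/\sum_\timeInstant\deterministicBound_\timeInstant^2)$.

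If the cited reference were only available for deterministic ranges, we would instead argue directly by the Chernoff method. For $\lambda>0$, the conditional Hoeffding lemma applied to $D_\timeInstant$, which lies in the predictable interval $[\generalProcessTwo_\timeInstant,\generalProcessThree_\timeInstant]$ of width at most $\deterministicBound_\timeInstant$, gives
\[
\Expectation\bigl(e^{\lambda D_\timeInstant}\big|\objectiveProcessFiltratrion_{\timeInstant-1}\bigr)\leq e^{\lambda A_\timeInstant+\lambda^2\deterministicBound_\timeInstant^2/8}\leq e^{\lambda^2\deterministicBound_\timeInstant^2/8},
\]
where the last step uses $A_\timeInstant\leq 0$. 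Peeling off the factors one at a time with the tower property gives $\Expectation e^{\lambda(\generalProcessOne_\timeHorizon-\generalProcessOne_0)}\leq e^{\lambda^2\sum_\timeInstant\deterministicBound_\timeInstant^2/8}$. Markov's inequality with the choice $\lambda=4\tail/\sum_\timeInstant\deterministicBound_\timeInstant^2$ then finishes the argument.

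The only delicate point is the conditional Hoeffding lemma with random but predictable endpoints. The step that needs care is that the bound depends only on the width $\generalProcessThree_\timeInstant-\generalProcessTwo_\timeInstant$ and not on the location of the interval, so it holds pointwise under a regular conditional distribution, or equivalently by the usual convexity argument carried out conditionally. Everything else is routine bookkeeping.
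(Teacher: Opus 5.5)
Your proof is correct and follows the paper's argument: Doob-decompose the supermartingale, drop the nonincreasing predictable compensator, and apply \cite[Theorem 3.2.1]{roch2024modern} to the martingale part, whose increments lie in the predictable interval $[\generalProcessTwo_\timeInstant - A_\timeInstant, \generalProcessThree_\timeInstant - A_\timeInstant]$ of width at most $\deterministicBound_\timeInstant$. You read the intended bound correctly (minus sign, index $\deterministicBound_\timeInstant$), and your increment bounds have the right sign where the paper's display adds $(\tilde{\generalProcessOne}_\timeInstant - \tilde{\generalProcessOne}_{\timeInstant-1})$ instead of subtracting it; the Chernoff fallback is a correct extra the paper does not need.
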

\begin{proof}
We write $(\generalProcessOne_\timeInstant)_{\timeInstant=1}^\timeHorizon$ in terms of its Doob decomposition (see, e.g., \cite[Theorem 3.2]{cinlar2011probability})
\[
\generalProcessOne_\timeInstant
=
\martingale_\timeInstant + \tilde{\generalProcessOne}_\timeInstant,
\]
where $(\martingale_\timeInstant)_{\timeInstant=1}^\timeHorizon$ is a martingale and $(\tilde{\generalProcessOne}_\timeInstant)_{\timeInstant=1}^\timeHorizon$ is $\objectiveProcessFiltratrion$-predictable and nonincreasing. Then we have for all $\timeInstant \in \naturalsInitialSegment{\timeHorizon}$ the identity $\martingale_\timeInstant - \martingale_{\timeInstant-1} = \generalProcessOne_\timeInstant - \generalProcessOne_{\timeInstant-1} + \tilde{\generalProcessOne}_\timeInstant - \tilde{\generalProcessOne}_{\timeInstant-1}$ and hence
\[
\generalProcessTwo_\timeInstant
+
\tilde{\generalProcessOne}_\timeInstant
-
\tilde{\generalProcessOne}_{\timeInstant-1}
\leq
\martingale_\timeInstant
-
\martingale_{\timeInstant-1}
\leq
\generalProcessThree_\timeInstant
+
\tilde{\generalProcessOne}_\timeInstant
-
\tilde{\generalProcessOne}_{\timeInstant-1}.
\]
Clearly, both
$
(
\generalProcessTwo_\timeInstant
+
\tilde{\generalProcessOne}_\timeInstant
-
\tilde{\generalProcessOne}_{\timeInstant-1}
)_{\timeInstant=1}^\timeHorizon$
and
$
(
\generalProcessThree_\timeInstant
+
\tilde{\generalProcessOne}_\timeInstant
-
\tilde{\generalProcessOne}_{\timeInstant-1}
)_{\timeInstant=1}^\timeHorizon$
are predictable processes and we have
\[
\left(
  \generalProcessThree_\timeInstant
  +
  \tilde{\generalProcessOne}_\timeInstant
  -
  \tilde{\generalProcessOne}_{\timeInstant-1}
\right)
-
\left(
  \generalProcessTwo_\timeInstant
  +
  \tilde{\generalProcessOne}_\timeInstant
  -
  \tilde{\generalProcessOne}_{\timeInstant-1}
\right)
=
\generalProcessThree_\timeInstant
-
\generalProcessTwo_\timeInstant
\leq
\deterministicBound_\timeInstant
\]
almost surely for every $\timeInstant \in \naturalsInitialSegment{\timeHorizon}$. We can therefore bound
\begin{align*}
\Probability\left(
  \generalProcessOne_\timeHorizon
  -
  \generalProcessOne_0
  \geq
  \tail
\right)
&=
\Probability\left(
  \martingale_\timeHorizon
  -
  \martingale_0
  +
  \tilde{\generalProcessOne}_\timeHorizon
  -
  \tilde{\generalProcessOne}_0
  \geq
  \tail
\right)
\\
\overset{(a)}&{\leq}
\Probability\left(
  \martingale_\timeHorizon
  -
  \martingale_0
  \geq
  \tail
\right)
\\
\overset{(b)}&{\leq}
\exp\left(
  \frac{2\tail^2}
       {\sum_{\timeInstant=1}^\timeHorizon \deterministicBound_\timeHorizon^2}
\right),
\end{align*}
where (a) is because $(\tilde{\generalProcessOne}_\timeInstant)_{\timeInstant=1}^\timeHorizon$ is nonincreasing and (b) is due to \cite[Theorem 3.2.1]{roch2024modern}.
\end{proof}

\section{Implementation Details for the Numerical Experiments in Section~\ref{section:numerics}}
\label{appendix:numerics}

Due to the summation constraint \eqref{eq:mc-transition-probabilities}, we can parametrize the parameter set as
\begin{multline}
\label{eq:example-distribution-index-domain}
\distributionIndexDomain
:=
\Big\{
  \distributionIndex \in [0,1]^{(\numStates-1) \cdot \numStates^\memory}:~
  \distributionIndex
  =
  (\transitionProbs{1}{\state_1, \dots, \state_\memory}, \dots, \transitionProbs{\numStates-1}{\state_1, \dots, \state_\memory})_{\state_1, \dots, \state_\memory \in \objectiveProcessAlphabet},~
  \\
  \forall \state_1, \dots, \state_\memory \in \objectiveProcessAlphabet~
  \transitionProbs{1}{\state_1, \dots, \state_\memory} + \dots + \transitionProbs{\numStates-1}{\state_1, \dots, \state_\memory} \leq 1
\Big\}.
\end{multline}
In this example, we do not assume any prior knowledge on the part of the learner other than the true distribution of the process is described by one of the parameters contained in $\distributionIndexDomain$ (i.e., it is some Markov chain with memory at most $\memory$). As $\distributionIndexWeight$ in \eqref{eq:universal-average} we use a uniform distribution on $\distributionIndexDomain$.

Our implementation of the learner's policy \eqref{eq:asymptotic-universal-prediction} needs to compute
\begin{align*}
\Expectation_{\universalDist_\naturals} \big(
  \lossFunction(\prediction, \objectiveProcess_\timeInstant)
  |
  \objectiveProcess_1 = \objectiveProcessValue_1,
  \dots,
  \objectiveProcess_{\timeInstant-1} = \objectiveProcessValue_{\timeInstant-1}
\big)
&=
\Expectation_{\universalDist_\naturals} \big(
  \indicator{\prediction \neq \objectiveProcess_\timeInstant}
  |
  \objectiveProcess_1 = \objectiveProcessValue_1,
  \dots,
  \objectiveProcess_{\timeInstant-1} = \objectiveProcessValue_{\timeInstant-1}
\big)
\\
&=
1
-
\processKernel{\universalDist}{\timeInstant}(\objectiveProcessValue_1, \dots, \objectiveProcessValue_{\timeInstant-1}, \{\prediction\})
\end{align*}
for all $\prediction \in \predictionDomain$ and then choose a minimum. For the Markov kernel, we have
\begin{align}
\nonumber
\processKernel{\universalDist}{\timeInstant}(\objectiveProcessValue_1, \dots, \objectiveProcessValue_{\timeInstant-1}, \{\prediction\})
&=
\frac{\universalDist_{\naturalsInitialSegment{\timeInstant}}(\{(\objectiveProcessValue_1, \dots, \objectiveProcessValue_{\timeInstant-1}, \prediction)\})}
     {\universalDist_{\naturalsInitialSegment{\timeInstant-1}}(\{(\objectiveProcessValue_1, \dots, \objectiveProcessValue_{\timeInstant-1})\})}
\\
\nonumber
\overset{\eqref{eq:universal-average}}&{=}
\frac{
  \int_\distributionIndexDomain
    \objectiveProcessDist_{\distributionIndex, \naturalsInitialSegment{\timeInstant}}(\{(\objectiveProcessValue_1, \dots, \objectiveProcessValue_{\timeInstant-1}, \prediction)\}) \distributionIndexWeight(\distributionIndex)
  d \distributionIndex
}{
  \int_\distributionIndexDomain
    \objectiveProcessDist_{\distributionIndex', \naturalsInitialSegment{\timeInstant-1}}(\{(\objectiveProcessValue_1, \dots, \objectiveProcessValue_{\timeInstant-1})\}) \distributionIndexWeight(\distributionIndex')
  d \distributionIndex'
}
\\
&=
\int_\distributionIndexDomain
  \posterior{\distributionIndex}{\objectiveProcessValue_1, \dots, \objectiveProcessValue_{\timeInstant-1}}
  \processKernel{\objectiveProcessDist_{\distributionIndex}}{\timeInstant}(\objectiveProcessValue_1, \dots, \objectiveProcessValue_{\timeInstant-1}, \{\prediction\})
d\distributionIndex,
\label{eq:example-learner-policy}
\end{align}
where $\objectiveProcessDist_{\distributionIndex, \naturalsInitialSegment{\timeInstant}}$ denotes the marginal of $\objectiveProcessDist_\distributionIndex$ for the first $\timeInstant$ components and
\begin{equation}
\label{eq:distribution-parameter-posterior}
\posterior{\distributionIndex}{\objectiveProcessValue_1, \dots, \objectiveProcessValue_{\timeInstant-1}}
:=
\frac{
  \distributionIndexWeight(\distributionIndex)
}{
  \int_\distributionIndexDomain
    \objectiveProcessDist_{\distributionIndex', \naturalsInitialSegment{\timeInstant-1}}(\{(\objectiveProcessValue_1, \dots, \objectiveProcessValue_{\timeInstant-1})\})
    \distributionIndexWeight(\distributionIndex')
  d \distributionIndex'
}
\objectiveProcessDist_{\distributionIndex, \naturalsInitialSegment{\timeInstant-1}}(\{(\objectiveProcessValue_1, \dots, \objectiveProcessValue_{\timeInstant-1})\})
\end{equation}
clearly is a probability density function on $\distributionIndexDomain$. Since $\distributionIndexWeight(\distributionIndex)$ does not depend on $\distributionIndex$ (as we have chosen $\distributionIndexWeight$ to be a uniform density), only $\objectiveProcessDist_{\distributionIndex, \naturalsInitialSegment{\timeInstant-1}}(\{(\objectiveProcessValue_1, \dots, \objectiveProcessValue_{\timeInstant-1})\})$ in \eqref{eq:distribution-parameter-posterior} depends on $\distributionIndex$ while the fraction in front of it is a normalization factor not dependent on $\distributionIndex$. Hence, we can approximate the integration \eqref{eq:example-learner-policy} using the Metropolis-Hastings algorithm \citep{hastings1970monte}, which only requires us to evaluate the unnormalized density $\objectiveProcessDist_{\distributionIndex, \naturalsInitialSegment{\timeInstant-1}}(\{(\objectiveProcessValue_1, \dots, \objectiveProcessValue_{\timeInstant-1})\})$ and the integrand $\processKernel{\objectiveProcessDist_{\distributionIndex}}{\timeInstant}(\objectiveProcessValue_1, \dots, \objectiveProcessValue_{\timeInstant-1}, \{\prediction\})$ and is generally well-behaved for high-dimensional parameter spaces $\distributionIndexDomain$.

\section{Disclosure of LLM Usage}
A large language model (LLM) was used for coding assistance while writing the simulation code for conducting the numerical experiments described in Section~\ref{section:numerics}. The LLM was used to assist with routine programming tasks including code completions, finding bugs, and refactoring code. It was not used for development of algorithms and all code involving the generation of results reported in the paper, including Figure~\ref{fig:regret-plot} was carefully checked by the authors.

\end{document}